\newcommand{\Esp}[1]{\mathrm{E}\left[#1\right]}
\newcommand{\Var}[1]{\mathrm{V}\left[#1\right]}
\newcommand{\Cov}[1]{\mathrm{Cov}\left[#1\right]}
\newcommand{\norm}[1]{{\Vert{}#1\Vert}}
\newcommand{\abs}[1]{{\vert{}#1\vert}}
\newcommand{\transpose}[1]{{#1}^t}
\newcommand{\numax}{\bar\nu}
\newcommand{\CalgoTot}{\mathcal{C}}
\newcommand{\CalgoAlpha}{\mathcal{C}_\alpha}
\newcommand{\CalgoBeta}{\mathcal{C}_\beta}
\newcommand{\R}{\mathds{R}}
\newcommand{\set}[1]{{\lbrace{#1}\rbrace}}
\newcommand{\Indic}[1]{\mathbf{1}_{\set{#1}}}
\newtheorem{example}{Example}
\newtheorem{proposition}{Proposition}
\newtheorem{definition}{Definition}
\newcommand{\accolade}[1]{\left\{{\renewcommand{\arraystretch}{1.15}\begin{array}{lcl}#1\end{array}}\right.}
\newcommand{\accoladesplit}[1]{\left\{\begin{aligned}#1\end{aligned}\right.}
\newcommand{\virguleacc}{\rlap{\,,}}
\newcommand{\pointacc}{\rlap{\,.}}
\newcommand{\ie}{i.e.\@ }
\newcommand{\eg}{e.g.\@ }
\newcommand{\tcpx}[1]{\;}
\newcommand{\M}{M}
\newcommand{\Y}{Y}
\newcommand{\agg}{{\mathcal{A}}}
\newcommand{\full}{{full}}
\newcommand{\ns}{s}
\newcommand{\KM}{K_M(x)}
\newcommand{\kxM}{k_M(x)}
\newcommand{\Mpx}{{\M_i(x),i\in\agg}}
\newcommand{\complexity}[1]{\mathcal{C}_{\mathrm{#1}}}
\newcommand{\Ccov}{\complexity{cov}}
\newcommand{\boxand}{\quad \mbox{ and } \quad}
\newcommand{\setI}{{\agg}}
\newcommand{\shorteq}{%
  \settowidth{\@tempdima}{a}
  \, \resizebox{\@tempdima}{\height}{=} \,%
}
\newcommand{\did}[1]{#1}
\title{Nested Kriging predictions for datasets with large number of observations}
\author{
Didier Rulli\`{e}re\footnote{Universit\'{e} de Lyon, Universit\'{e} Claude Bernard Lyon~1, ISFA, Laboratoire SAF, EA2429, 50 avenue Tony Garnier, 69366 Lyon, France, didier.rulliere@univ-lyon1.fr.},
Nicolas Durrande\footnote{\emph{Corresponding author}, Mines Saint-\'Etienne, H. Fayol Institute, 158 cours Fauriel, Saint-\'Etienne, France and CNRS LIMOS, UMR 5168,  durrande@emse.fr.},
Fran\c cois Bachoc\footnote{Institut de Math\'ematiques de Toulouse, Universit\'e Paul Sabatier, 118 route de Narbonne, 31062 Toulouse, France, francois.bachoc@math.univ-toulouse.fr.}  \ and
Cl\'ement Chevalier\footnote{Institute of Statistics, University of Neuch\^atel, avenue de Bellevaux 51, 2000 Neuch\^atel, Switzerland, clement.chevalier@unine.ch.}.}
\newcommand{\Id}{I_d}
\begin{document}

\maketitle

\begin{abstract}
This work falls within the context of predicting the value of a real function at some input locations given a limited number of observations of this function. The Kriging interpolation technique (or Gaussian process regression) is often considered to tackle such a problem but the method suffers from its computational burden when the number of observation points is large. We introduce in this article nested Kriging predictors which are constructed by aggregating sub-models based on subsets of observation points. This approach is proven to have better theoretical properties than other aggregation methods that can be found in the literature. \did{Contrarily} to some other methods it can be shown that the proposed aggregation method is consistent. Finally, the practical interest of the proposed method is illustrated on simulated datasets and on an industrial test case with $10^4$ observations in a 6-dimensional space.
\end{abstract}
\textbf{Keywords:} Gaussian process regression $\cdot$ big data $\cdot$ aggregation methods $\cdot$ best linear unbiased predictor $\cdot$ spatial processes.


\section{Introduction}

\paragraph{}
\begin{sloppypar}
Gaussian process regression models have proven to be of great interest in many fields when it comes to predict the output of a function $f : D \rightarrow \R$, $D\subset \R^d$,  based on the knowledge of $n$ input-output tuples $(x_i,f(x_i))$ for $1 \leq i \leq n $ \cite{stein2012interpolation,santner2013design,Rasmussen2006}. One asset of this method is to provide not only a mean predictor but also a quantification of the model uncertainty. The Gaussian process regression framework uses a (centered) real-valued Gaussian process $\Y$ over $D$ as a prior distribution for $f$ and approximates it by the conditional distribution of $Y$ given the observations $Y(x_i)=f(x_i)$ for $1 \leq i \leq n $.
In this framework, we denote by $k: D \times D \rightarrow \mathds{R}$ the covariance function (or kernel) of $\Y$: $k(x,x')=\Cov{\Y(x),\Y(x')}$, and by $X \in D^n$ the  vector of observation points with entries $x_i$ for $1 \le i \le n$.
\end{sloppypar}
\paragraph{}
In the following, we use classical vectorial notations: for any functions $f: D \rightarrow \R$, $g: D \times D \rightarrow \R$ and for any vectors $A=(a_1,\ldots, a_n) \in D^n$ and $B=(b_1, \ldots, b_m) \in D^m$, we denote by $f(A)$ the $n \times 1$ real valued vector with components $f(a_i)$ and by $g(A,B)$ the $n \times m$ real valued matrix with components $g(a_i, b_j)$, $i=1,\dots, n$, $j=1,\dots, m$. With such notations, the conditional distribution of $\Y$ given the $n \times 1$ vector of observations $\Y(X)$ is Gaussian with mean, covariance and variance:
\begin{equation}
\accoladesplit{
	\M_\full(x) &= \Esp{\Y(x)|\Y(X)} = k(x,X) k(X,X)^{-1} \Y(X) \virguleacc \\
	c_\full(x,x') &= \Cov{\Y(x),\Y(x')|\Y(X)} = k(x,x') - k(x,X) k(X,X)^{-1} k(X,x') \virguleacc\\
v_\full(x) &= c_\full(x,x) \pointacc
}
\label{eq:fullmodel}
\end{equation}
Since we do not specify yet the values taken by $\Y$ at $X$, the ``mean predictor'' $\M_\full(x)$ is random so it is denoted by an upper-case letter $M$. The approximation of $f(x)$ given the observations $f(X)$ is thus given by $m_\full(x) = \Esp{\Y(x)|\Y(X)=f(X)}= k(x,X) k(X,X)^{-1} f(X)$. This method is quite general since an appropriate choice of the kernel allows to recover the models obtained from various frameworks such as linear regression and splines models~\cite{wahba1990spline}.

\paragraph{}
One limitation of such models is the computational time required for building models based on a large number of observations. Indeed, these models require computing and inverting the $n \times n$ covariance matrix $k(X,X)$ between the observed values $\Y(X)$, which leads to a $O(n^2)$ complexity in space and $O(n^3)$ in time. In practice, this computational burden makes Gaussian process regression difficult to use when the number of observation points is in the range $[10^3,10^4]$ or greater.

\paragraph{}
Many methods have been proposed in the literature to overcome this limit. Let us first mention that, when the observations are recorded on a regular grid, choosing a separable covariance function $k$ enables to drastically simplify the inversion of the covariance matrix $k(X,X)$, since the latter can be written as a Kronecker product. In the same context of gridded data, alternative approaches such as Gaussian Markov Random Fields are also available~\cite{rue05gaussian}.

\paragraph{}
For irregularly spaced data, a common approach in machine learning relies on inducing points. It consists in introducing a set $W$ of pseudo input points and in approximating the full conditional distribution $\Y(x)|\Y(X)$ by $\Y(x)|\Y(W)$. The challenge here is to find the best locations for the inducing inputs and to decide which values should be assigned to the outputs at $W$. Various methods are suggested in the literature to answer these questions \cite{guhaniyogi2011adaptive,hensman2013,katzfuss2013bayesian,zhang2015full}. One drawback of this kind of approximation is that the predictions do not interpolate the observation points any more. Note that this method has recently been combined with the Kronecker product method in~\cite{nickson2015blitzkriging}.

\paragraph{}

Other methods rely \did{on} low rank approximations or compactly supported covariance functions. Both \did{methods} show limitations when the scale dependence is respectively short and large. For more details and references, see \cite{stein14limitations,maurya16well,bachoc2017}. Another
drawback – which to the best of our knowledge is little discussed in the literature – is the difficulty to use these methods when the dimension of the input space is large (say larger than 10, which is frequent in computer experiments or machine learning).

\paragraph{}
Let us also mention that the computation of $k(X,X)^{-1} y$, for an arbitrary vector $y\in \did{\R}^n$ can be performed using iterative algorithms, like the preconditionned conjugate gradient algorithm\linebreak \cite{golub12matrix}. Unfortunately, the algorithms need to be run many times when a posterior variance -- involving the computation of $k(X,X)^{-1} k(X,x_i)$ -- needs to be computed for a large set of prediction points.

\paragraph{}
The method proposed in this paper belongs to the so-called ``mixture of experts'' family. The latter relies on the aggregation of sub-models based on subsets of the data which make them easy to compute. This kind of methods offers a great flexibility since it can be applied with any covariance function and in large dimension while retaining the interpolation property. Some existing ``mixture of experts'' methods are product of experts \cite{hinton2002training}, and the (robust) Bayesian committee machine \cite{trespBCM,deisenroth2015}. All these methods are based on a similar approach: for a given point $x$, each sub-model provides its own prediction (a mean and a variance) and these predictions are then merged into one single mean and prediction variance. The  differences between these methods lie in how to aggregate the predictions made by each sub-model.
It shall be noted that aggregating expert opinions is the topic of \textit{consensus statistical methods} (sometimes referred to as \textit{opinion synthesis} or \textit{averaging methods}), where probability distributions representing expert opinions are joined together. Early references are \cite{winkler1968consensus,winkler1981combining}. A detailed review and an annotated bibliography is given in \cite{genest1986combining} (see also~\cite{satopaa2015modeling,ranjan2010combining} for recent related developments). From a probabilistic perspective, usual mixture of experts methods assume that there is some (conditional) independence between the sub-models. Although this kind of hypothesis leads to efficient computations, it is often violated in practice and may lead to poor predictions as illustrated in \cite{samo2016string}. Furthermore, these methods only provide pointwise confidence intervals instead of a full Gaussian process posterior distribution.

\paragraph{}
Since our method is part of the mixture of experts framework, it
benefits from the properties of the mixture of experts techniques: it
does not require the data to be on a grid, the predictions can interpolate the observations and it can be applied to data with small or large scale dependencies regardless of the input space dimension. Compared to other mixtures of experts, we relax the usually made independence assumption so that the prediction takes into account all $n^2$ pairwise cross-covariances between observations. We show that this addresses two main pitfalls of usual mixture of experts. First, the predictions are more accurate. Second, the theoretical consistency is ensured whereas it is not the case for the product of experts and the Bayesian committee machine methods. The detailed proofs of the later are out of the scope of this paper and we refer the interested reader to~\cite{bachoc2017} for further details.
The proposed method remains computationally affordable:  predictions are performed in a few seconds for $n = 10^4$ and a few minutes for $n=10^5$ using a standard laptop and the proposed online implementation. Finally, the prediction method comes with a naturally associated inference procedure, which is based on cross validation errors.

\paragraph{}
The proposed method is presented in Section~\ref{sec:2aggregation}. 
In Section~\ref{sec:3iterativescheme}, we introduce an iterative scheme for nesting the predictors derived previously. A procedure for estimating the parameters of models is then given in Section~\ref{sec:6estimation}. Finally, Section~\ref{sec:5discussion} compares the method with state-of-the-art aggregation methods on both a simulated dataset and an industrial case study.


\section{\did{Pointwise aggregation of experts}}
\label{sec:2aggregation}

\label{subsec:2a_pointwise}

\paragraph{}
Let us now address in more details the framework of this article. The method is based on the aggregation of sub-models defined on smaller subsets of points. Let $X_1,\ \dots,\ X_p$ be subvectors of the vector of observations input points $X$, it is thus possible to define $p$ associated sub-models (or \emph{experts}) $\M_1,\ \dots,\ \M_p$. For example, the sub-model $\M_i$ can be a Gaussian process regression model based on a subset of the data
\begin{equation}
	\M_i(x) = \Esp{\Y(x)|\Y(X_i)} = k(x,X_i) k(X_i,X_i)^{-1} \Y(X_i) \, ,\\
\label{eq:sub-models}
\end{equation}
however, we make no Gaussian assumption in this section. For a given prediction point $x\in D$, the $p$ sub-models predictions are gathered into a $p \times 1$ vector $M(x)=\transpose{(\M_1(x) \dots, \M_p(x))}$. The random column vector $\transpose{(\M_1(x),\ldots, \M_p(x),\Y(x))}$ is supposed to be centered with finite first two moments and we consider that both the $p \times 1$ covariance vector $\kxM=\Cov{M(x), Y(x)}$ and the $p \times p$ covariance matrix $\KM=\Cov{M(x),M(x)}$ are given.
%
Sub-models aggregation (or mixture of experts) aims at merging all the pointwise sub-models $M_1(x), \ldots, M_p(x)$ into one unique pointwise predictor $M_\agg(x)$ of $\Y(x)$. We propose the following aggregation:
\begin{samepage}
\begin{definition}[Sub-models aggregation]\label{def:agg}
For a given point $x \in D$, let $M_i(x)$, $i \in \agg =\set{1, \ldots, p}$ be sub-models with covariance matrix $\KM$. Then, when $\KM$ is invertible, we define the sub-model aggregation as:
\begin{equation}
	\M_\agg(x) =  \transpose{\kxM} \KM^{-1} \M(x).
	\label{eq:agg}
\end{equation}
\end{definition}
\end{samepage}
In practice, the invertibility condition on $\KM$ can be avoided by using matrices pseudo-inverses. Given the vector of observations $\M(x)=m(x)$, the associated prediction is
\begin{equation}
	m_\agg(x) = \transpose{\kxM} \KM^{-1} m(x).
\end{equation}
Notice that we are here aggregating random variables rather than their distributions. For dependent non-elliptical random variables, expressing the probability density function of $\M_\agg(x)$ as a function of each expert density $M_i(x)$ is not straightforward. This difference in the approaches implies that the proposed method differs from usual consensus aggregations. For example, aggregating random variables allows to specify the correlations between the aggregated prediction and the experts whereas aggregating expert distributions into a univariate prediction distribution does not characterize uniquely these correlations.
\begin{proposition}[BLUP]
	$\M_\agg(x)$ is the best linear unbiased predictor of $\Y(x)$ that writes $\sum_{i \in \agg} \alpha_i(x) \M_i(x)$. The mean squared error $v_\agg(x)=\Esp{(Y(x)-\M_\agg(x))^2}$ writes
	\begin{equation}
		v_\agg(x) = k(x,x) - \transpose{\kxM} \KM^{-1} \kxM \, .
	\end{equation}
The coefficients $\set{\alpha_i(x), i\in \agg}$ are given by the vector $\alpha=\transpose{\kxM} \KM^{-1}$.
\end{proposition}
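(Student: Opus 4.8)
The plan is to solve a standard constrained least-squares problem. Write a generic linear estimator as $\sum_{i\in\agg}\alpha_i(x)M_i(x) = \transpose{\alpha}M(x)$ for a deterministic coefficient vector $\alpha=\alpha(x)\in\R^p$. First I would record the unbiasedness constraint: since all the $M_i(x)$ and $\Y(x)$ are centered, $\Esp{\transpose{\alpha}M(x)}=0=\Esp{\Y(x)}$, so \emph{every} such linear combination is automatically unbiased; hence there is no equality constraint to carry around, which simplifies the Lagrangian argument to a plain unconstrained minimization over $\alpha$.

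Next I would expand the mean squared error as a quadratic form in $\alpha$. Using $\Esp{\Y(x)^2}=k(x,x)$, $\Cov{M(x),\Y(x)}=\kxM$ and $\Cov{M(x),M(x)}=\KM$, one gets
\begin{equation}
\Esp{(\Y(x)-\transpose{\alpha}M(x))^2} = k(x,x) - 2\transpose{\alpha}\kxM + \transpose{\alpha}\KM\alpha \, .
\end{equation}
This is a strictly convex function of $\alpha$ whenever $\KM$ is positive definite (which is the invertibility hypothesis of Definition~\ref{def:agg}), so its unique minimizer is found by setting the gradient to zero: $-2\kxM + 2\KM\alpha = 0$, i.e. $\alpha = \KM^{-1}\kxM$, equivalently $\transpose{\alpha} = \transpose{\kxM}\KM^{-1}$. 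Substituting this $\alpha$ into the estimator gives exactly $\transpose{\kxM}\KM^{-1}M(x) = \M_\agg(x)$ as in~\eqref{eq:agg}, which proves the BLUE claim and identifies the coefficients.

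Finally I would plug the optimal $\alpha$ back into the quadratic form to obtain the mean squared error. The cross term and the quadratic term combine: $-2\transpose{\kxM}\KM^{-1}\kxM + \transpose{\kxM}\KM^{-1}\KM\KM^{-1}\kxM = -\transpose{\kxM}\KM^{-1}\kxM$, so $v_\agg(x) = k(x,x) - \transpose{\kxM}\KM^{-1}\kxM$ as stated. I do not foresee a genuine obstacle here; the only point deserving a sentence of care is the symmetry and positive-definiteness of $\KM$, which is what makes the stationary point a global minimum rather than a saddle, and the remark that the centering assumption is precisely what dispenses with the usual unbiasedness multiplier in the Gauss--Markov derivation.
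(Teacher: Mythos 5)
Your proposal is correct and follows essentially the same route as the paper: expand the mean squared error as the quadratic form $k(x,x) - 2\transpose{\alpha}\kxM + \transpose{\alpha}\KM\alpha$, minimize by setting the gradient to zero, and substitute the optimal $\alpha = \KM^{-1}\kxM$ back to get $v_\agg(x)$. Your added remarks that centering makes unbiasedness automatic and that positive definiteness of $\KM$ guarantees a global minimum are correct refinements of the same argument, not a different approach.
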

\begin{proof}
The standard proof applies: The square error writes $\Esp{(Y(x)-\transpose{\alpha}\M(x))^2} = k(x,x) - 2 \transpose{\alpha} \kxM + \transpose{\alpha} \KM \alpha$. The value of $\alpha^*$ minimising it can be found by differentiation: $-2 \kxM + 2 \alpha^* \KM = 0$ which leads to $\alpha^* =  \KM^{-1} \kxM$. Then, $v_\agg(x) = k(x,x) - 2 \alpha^{*t} \kxM + \alpha^{*t} \KM \alpha^*$ and the result holds.
\end{proof}
\begin{proposition}[Basic properties]\label{prop:basicProperty}
Let $x$ be a given prediction point in $D$.
\begin{itemize}
	\item[(i)] \label{itm:prop:Lineaire} Linear case: if $\M(x)$ is linear in $Y(X)$, \ie if there exists a $p \times n$ deterministic matrix $\Lambda(x)$ such that $\M(x) = \Lambda(x) \Y(X)$ and if  $\Lambda(x) k(X,X) \transpose{\Lambda(x)}$ is invertible, then $M_\agg(x)$ is linear in $Y(X)$ with
	\begin{equation}
	\accolade{
		\M_\agg(x) &=& \transpose{\lambda_\agg(x)} \Y(X) \virguleacc\\
v_\agg(x) &=&	k(x,x) - \transpose{\lambda_\agg(x)} k(X,x)	\pointacc
		}
	\end{equation}
where $\transpose{\lambda_\agg(x)} = k(x,X) \transpose{\Lambda(x)}  \big(\Lambda(x) k(X,X) \transpose{\Lambda(x)} \big)^{-1} \Lambda(x)$.
\item[(ii)] Interpolation case: if $\M$ interpolates $\Y$ at $X$, \ie if for any component $x_k$ of the vector $X$ there is at least one index $i_k \in \agg$ such that $M_{i_k}(x_k)=Y(x_k)$, and if $K_M(x_k)$ is invertible for any component $x_k$ of $X$, then $\M_\agg$ is also interpolating, \ie
\begin{equation}
\accolade{
\M_\agg(X) &=& \Y(X) \virguleacc\\
v_\agg(X) &=& 0_n \virguleacc
}
\label{eq:interp}
\end{equation}
where $0_n$ is a $n \times 1$ vector with entries $0$. This property can be extended when some $K_M(x_k)$ are not invertible by using pseudo-inverse in place of matrix inverse in Definition~\ref{def:agg}.
\item[(iii)] Gaussian case: if the joint distribution $(\M(x),Y(x))$ is multivariate normal, then the conditional distribution of $\Y(x)$ given $\M(x)$ is normal with moments
	\begin{equation}
\accoladesplit{
			\Esp{\Y(x)|\Mpx} & = \M_\agg(x) \virguleacc  \\
			\Var{\Y(x)|\Mpx} & = v_\agg(x) \pointacc
}
		\label{eq:modelagg}
	\end{equation}
\end{itemize}
\end{proposition}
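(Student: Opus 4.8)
The plan is to handle the three items separately; each reduces to a short computation resting on the BLUE proposition just established, and the only place calling for any care is item~(ii), where one must remember that what forces $\M_\agg(x_k)=\Y(x_k)$ is the vanishing of a mean squared error rather than any uniqueness of optimal weights --- and check that this survives the passage to pseudo-inverses.

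For item~(i) I would substitute $\M(x) = \Lambda(x)\Y(X)$ into the covariance quantities entering Definition~\ref{def:agg}. Since $\Y$ is centered, $\KM = \Lambda(x)\,k(X,X)\,\transpose{\Lambda(x)}$ and $\kxM = \Lambda(x)\,k(X,x)$, so $\transpose{\kxM} = k(x,X)\transpose{\Lambda(x)}$. Plugging these into $\M_\agg(x) = \transpose{\kxM}\KM^{-1}\M(x)$ gives exactly the stated expression $\transpose{\lambda_\agg(x)}\Y(X)$, and plugging them into $v_\agg(x) = k(x,x) - \transpose{\kxM}\KM^{-1}\kxM$ from the BLUE proposition yields $v_\agg(x) = k(x,x) - \transpose{\lambda_\agg(x)}k(X,x)$. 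The only thing to check is that the inverse is well defined, which is precisely the assumed invertibility of $\Lambda(x)k(X,X)\transpose{\Lambda(x)} = \KM$.

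For item~(ii) I would argue componentwise on $X$. At a component $x_k$, the canonical choice of weights selecting the index $i_k$ yields the linear combination $\M_{i_k}(x_k) = \Y(x_k)$, which is unbiased (all variables are centered) and has mean squared error $0$. Since $\M_\agg(x_k)$ is, by the previous proposition, the BLUE among such linear combinations, $v_\agg(x_k)\le 0$; being the expectation of a square it is exactly $0$, hence $\M_\agg(x_k) = \Y(x_k)$ almost surely. Gathering this over all components of $X$ gives~\eqref{eq:interp}. For the pseudo-inverse extension the same logic applies: every minimiser of the mean squared error attains the value $0$ (because the canonical choice does), and a mean squared error equal to $0$ is equivalent to the corresponding linear combination being almost surely equal to $\Y(x_k)$, so the pseudo-inverse solution, being a minimiser, also coincides with $\Y(x_k)$.

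For item~(iii) I would simply invoke the standard Gaussian conditioning formula for the centered multivariate normal vector $\transpose{(\M(x),\Y(x))}$: the conditional law of $\Y(x)$ given $\M(x)$ is normal, with mean $\Cov{\Y(x),\M(x)}\KM^{-1}\M(x) = \transpose{\kxM}\KM^{-1}\M(x) = \M_\agg(x)$ and variance $k(x,x) - \transpose{\kxM}\KM^{-1}\kxM = v_\agg(x)$, the last equalities being just the definition of $\M_\agg$ and the MSE expression from the BLUE proposition.
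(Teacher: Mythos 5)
Your items (i) and (iii) follow the paper exactly: substitution of $\M(x)=\Lambda(x)\Y(X)$ into $\kxM$ and $\KM$ for the linear case, and the standard multivariate normal conditioning formula for the Gaussian case. Item (ii) is where you genuinely diverge, and your route is correct. The paper argues structurally: it observes that the $i_k$-th row of $K_M(x_k)$ equals $\transpose{k_M(x_k)}$ (because $M_{i_k}(x_k)=Y(x_k)$ forces $\Cov{M_{i_k}(x_k),M(x_k)}=\Cov{Y(x_k),M(x_k)}$), deduces $\transpose{e_{i_k}}=\transpose{k_M(x_k)}K_M(x_k)^{-1}$ from invertibility, and hence identifies $\M_\agg(x_k)$ with $M_{i_k}(x_k)$ directly; the vanishing of $v_\agg(x_k)$ is then read off afterwards. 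You instead invoke the BLUE property: the weight vector $e_{i_k}$ is admissible and achieves mean squared error $0$, so the optimal error satisfies $v_\agg(x_k)\le 0$, hence $v_\agg(x_k)=0$ and $\M_\agg(x_k)=\Y(x_k)$ almost surely. Your argument is shorter and, as you note, extends uniformly to the pseudo-inverse case, whereas the paper's explicit identification of the optimal weights relies on invertibility. The one step you assert without justification is that the pseudo-inverse weights $K_M(x_k)^{+}k_M(x_k)$ are indeed a minimiser of the quadratic error; this requires the normal equations $K_M(x_k)\alpha=k_M(x_k)$ to be consistent, which holds because any $v$ with $K_M(x_k)v=0$ gives $\Var{\transpose{v}M(x_k)}=0$, hence $\transpose{v}M(x_k)=0$ a.s.\ (the variables are centered) and $\transpose{v}k_M(x_k)=\Cov{\transpose{v}M(x_k),Y(x_k)}=0$, so $k_M(x_k)$ lies in the range of $K_M(x_k)$. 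With that one-line addition your proof is complete.
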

\begin{proof}
Linearity directly derives from  $k_M(x)=\Lambda(x) k(X,x)$ and $K_M(x)=\Lambda(x) K(X,X) \transpose{\Lambda(x)}$. \\
Interpolation: Let $k \in \set{1, \ldots, n}$, and  $i\in \agg$ be an index such that $M_i(x_k)=Y(x_k)$. As $K_M(x_k)=\Cov{M(x_k),M(x_k)}$, the $i^{th}$ line of $K_M(x_k)$ is equal to $\Cov{M_i(x_k), M(x_k)}=\Cov{Y(x_k),M(x_k)}=\transpose{k_M(x_k)}$. Setting $e_i$ the $p$ dimensional vector having entries $0$ except on its $i^{th}$ component, it is thus clear that $\transpose{e_i} K_M(x_k) = \transpose{k_M(x_k)}$. As $K_M(x_k)$ is assumed to be invertible, then $\transpose{e_i}  = \transpose{k_M(x_k)} K_M(x_k)^{-1}$, so that $M_\agg(x_k)=\transpose{k_M(x_k)} K_M(x_k)^{-1} M(x_k) = \transpose{e_i} M(x_k)= M_i(x_k)=Y(x_k)$. This result can be plugged into the definition of $v_\agg$ to obtain the second part of Eq.~\eqref{eq:interp}: $v_\agg(x_k) = \Esp{(Y(x_k)-\M_\agg(x_k))^2}=0$.\\
Finally the Gaussian case can be proved directly by applying the usual multivariate normal conditioning formula.
\end{proof}

\paragraph{}
\did{
Let us assume here that conditions in items~(i) and~(ii) of Proposition~\ref{prop:basicProperty} are satisfied, that is that $\M(x)$ is linear in $\Y(X)$, and that $\M_\agg(X)=\Y(X)$. Then, the proposed aggregation \did{method} also benefits from several other interesting properties:
\begin{itemize}
\item[$\bullet$] First, the aggregation can be seen as an exact conditional process for a slightly different prior distribution on $Y$. One can indeed define a process $\Y_\agg$ as $\Y_\agg = \M_\agg + \varepsilon'_\agg $ where $\varepsilon'_\agg$ is an independent replicate of $\Y - \M_\agg$ and with $M_{\agg}$ as in \eqref{eq:agg}. One can then show that $\Y_\agg(X)=\Y(X)$ and
\begin{equation*}
\accoladesplit{
		\M_\agg(x) & = \Esp{\Y_\agg(x) | \Y_\agg(X)} \virguleacc\\
		v_\agg(x) & = \Var{\Y_\agg(x) | \Y_\agg(X)} \pointacc
}
\end{equation*}
Denoting $k_\agg$ the covariance function of $\Y_\agg$, one can also show that $k_\agg(x,x) = k(x,x)$ for all $x \in D$ and $k_\agg(X,X) = k(X,X)$.
\item[$\bullet$] Second, the error between the aggregated model and the full model of Equation~\eqref{eq:fullmodel} can be bounded. For any norm $\norm{.}$, one can show that there exists some constants $\lambda, \mu \in \R^+$ such that
	\begin{equation*}
		\accolade{
			\abs{\M_\agg(x)-M_\full(x)} & \le & \lambda \norm{k(X,x)} \norm{Y(X)} \virguleacc\\
			\abs{v_\agg(x)-v_\full(x)} & \le & \mu \norm{k(X,x)}^2 \pointacc
		}
	\end{equation*}
One can also show that the differences between the full and aggregated models write as norm differences, where $\norm{u}_K^2 = \transpose{u} k(X,X)^{-1} u$:
\begin{equation*}
\accoladesplit{
	\Esp{(\M_\agg(x) - \M_\full(x))^2} &= \norm{k(X,x)-k_\agg(X,x)}_K^2 \virguleacc\\
	v_\agg(x) - v_\full(x) &= \norm{k(X,x)}_K^2 - \norm{k_\agg(X,x)}_K^2 \pointacc
}
\label{eq:diff}
\end{equation*}
\item[$\bullet$] Third, contrarily to several other aggregation methods, when sub-models are informative enough, the difference between the aggregated model and the full model vanishes: when $\M(x) = \Lambda(x) \Y(X)$ where $\Lambda(x)$ is a $n\times n$ matrix with full rank, then $\Y_\agg \stackrel{law}{=} \Y$ and $\Y_\agg |  \Y_\agg(X) \stackrel{law}{=} \Y | \Y(X)$. Furthermore, in this full-information case,
	\begin{equation*}
	\accolade{
		\M_\agg(x) &=& M_\full(x) \virguleacc\\
		v_\agg(x) &=& v_\full(x) \pointacc
		}
		\label{eq:fullyinformative}
	\end{equation*}
\item[$\bullet$] Finally, in the Gaussian case and under some supplementary conditions, it can be proven that, contrarily to several other aggregation methods, the proposed method is consistent when the number of observation points tends toward infinity. Let $(x_{ni})_{1 \leq i \leq n, n \in \mathds{N}}$ be a triangular array of observation points such that for all $x \in D$, $\lim_{n \to \infty} \min_{i=1,...,n} || x_{ni} - x || = 0$. For $n \in \mathds{N}$, let $X=(x_{n1},...,x_{nn})^t$, let $\M_{1}(x),...,\M_{p_n}(x)$ be any collection of $p_n$ simple Kriging predictors based on respective design points $X_1,\ \dots, \ X_{p_n}$ where $\cup_{i=1}^{p_n} X_i=X$ (with a slight abuse of notation), then
\begin{equation*}
\sup_{x \in D} \mathbb{E} \left(
\left(
Y(x) - M_{\agg}(x)
\right)^2
\right)
\to_{n \to \infty} 0.
\end{equation*}
One can also exhibit non-consistency results for other aggregation methods of the literature that do not use covariances between sub-models.
\end{itemize}
}
The full development of these properties is out of the scope of this paper and we dedicate a separate article to detail them~\cite{bachoc2017}\did{.\\}

We now illustrate our aggregation method with two simple examples.

\begin{example}[Gaussian process regression aggregation]\label{example:Kriging}
In this example, we set $D=\R$ and we approximate the function $f(x)=\sin(2 \pi x) +x$ based on  a set of five observation points in $D$: $\set{0.1, 0.3, 0.5, 0.7, 0.9}$. These observations are gathered in two column vectors $X_1 = (0.1, 0.3, 0.5)$ and $X_2 = (0.7, 0.9)$.
We use as prior a centered Gaussian process $\Y$ with squared exponential covariance $k(x,x') = \exp \left( -12.5 (x-x')^2 \right)$ in order to build two Kriging sub-models, for $i \in \set{1, 2}$:
\begin{equation}\label{suppr1}
\accoladesplit{
		\M_i(x) &= \Esp{\Y(x)|\Y(X_i)} = k(x,X_i)k(X_i,X_i)^{-1} \Y(X_i) \virguleacc \\
		m_i(x) &= \Esp{\Y(x)|\Y(X_i) \shorteq f(X_i)} = k(x,X_i)k(X_i,X_i)^{-1}f(X_i) \pointacc
}
\end{equation}
The expressions required to compute $\M_\agg$ as defined in Eq.~\eqref{eq:agg} are for $i,j \in \set{1, 2}$:
\begin{equation}\label{suppr2}
\accoladesplit{
		\big(\kxM \big)_i &= \Cov{M_i(x),Y(x)} = k(x,X_i)k(X_i,X_i)^{-1} k(X_i,x)  \virguleacc \\
		\big(\KM \big)_{i,j} &= \Cov{M_i(x),M_j(x)} = k(x,X_i)k(X_i,X_i)^{-1} k(X_i,X_j) k(X_j,X_j)^{-1} k(X_j,x) \pointacc
}
\end{equation}
Recall $m_\full(x) = \Esp{\Y(x)|\Y(X) \shorteq f(X)}$ and $v_\full(x) = \Var{\Y(x)|\Y(X) \shorteq f(X)}$, as it can be seen in Figure~\ref{fig:ex2}, the resulting model $m_\agg$ appears to be a very good approximation of $m_\full$ and there is only a slight difference between prediction variances $v_\agg$ and $v_\full$ on this example.
\begin{figure}
\centering
  \subfloat[sub-models to aggregate]{\includegraphics[height=5cm]{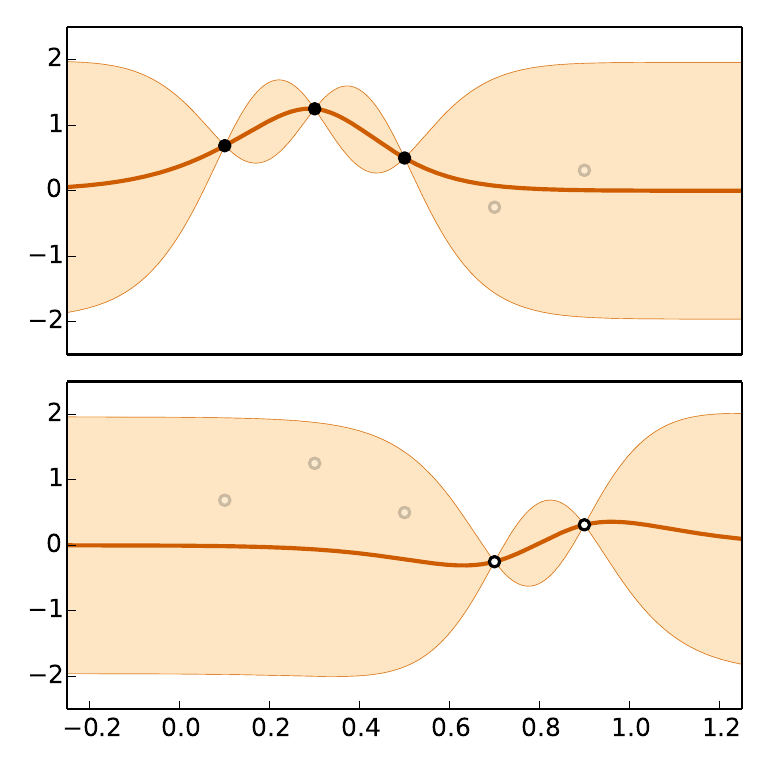}}
  \subfloat[aggregated model (solid lines) and full model (dashed lines)]{\includegraphics[height=5cm]{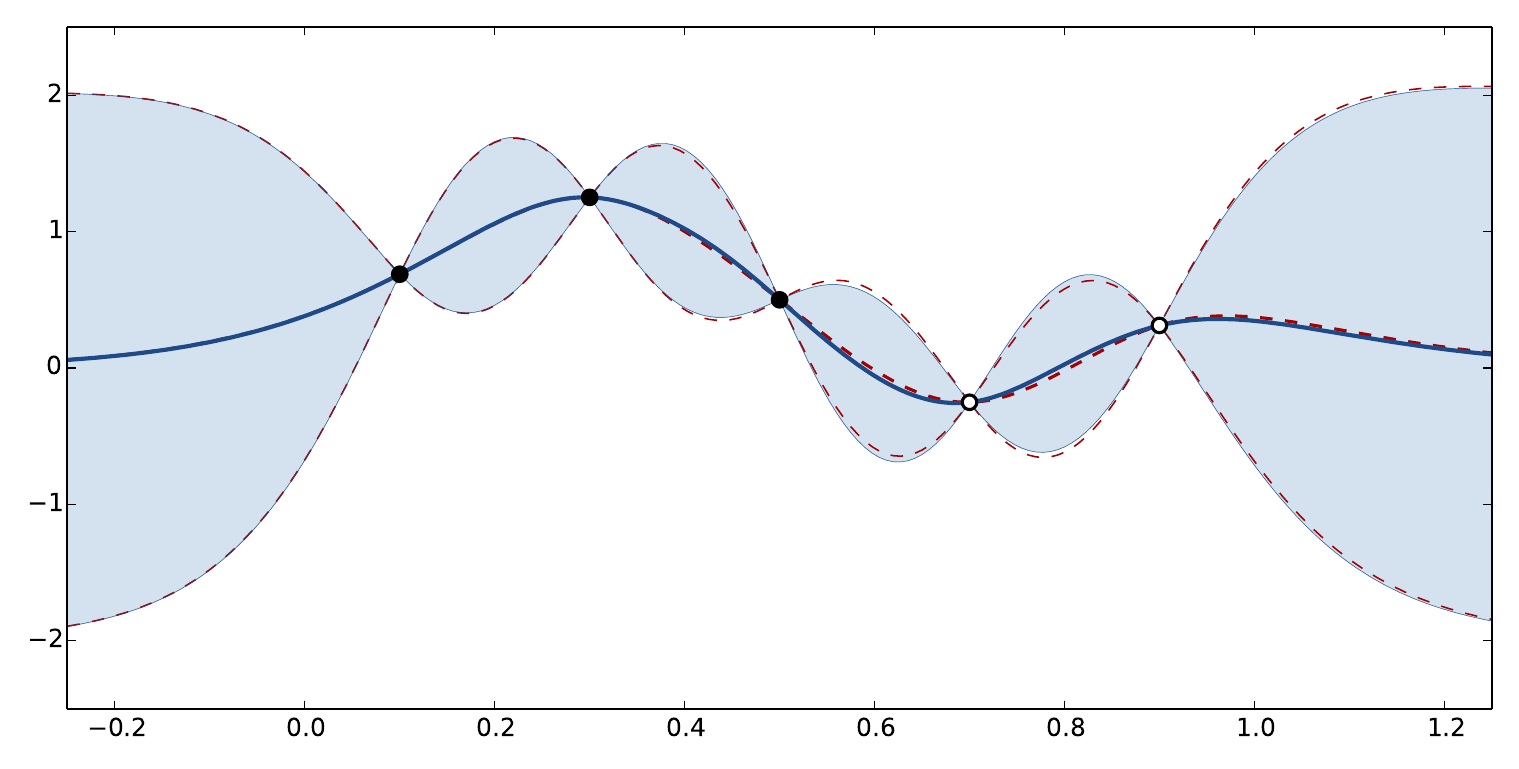}}
  \caption{Example of aggregation of two Gaussian process regression models. For each model, we represent the predicted mean and $95\%$ confidence intervals.}
  \label{fig:ex2}
\end{figure}
\end{example}
\begin{example}[Linear regression aggregation]
In this distribution-free example, we set $D=\R$ and we consider the process $\Y(x) = \varepsilon_1 + \varepsilon_2 x $ where $\varepsilon_1$ and $\varepsilon_2$ are independent centered random variables  with unit variance. $\Y$ is thus centered with covariance $k(x,x')=1+xx'$. Furthermore, we consider that $\Y$ is corrupted by some observation noise $\Y_{obs}(x) = \Y(x) + \varepsilon_3(x)$ where $\varepsilon_3(x)$ is an independent white noise process with covariance $k_3(x,x')=\Indic{x=x'}$. Note that we only make assumptions on the first two moments of $\varepsilon_1$, $\varepsilon_2$ or $\varepsilon_3(x)$ but not on their laws.
We introduce five observation points gathered in two column vectors: $X_1 =\transpose{(0.1, 0.3, 0.5 )}$ and $X_2 = \transpose{(0.7, 0.9 )}$ and their associated outputs $y_1 = \transpose{(2.05, 0.93, 0.31)}$ and $y_2 = \transpose{(-0.47,0.12 )}$.
The linear regression sub-models, obtained by square error minimization, are $M_i(x)=k(x,X_i)(k(X_i,X_i)+ \Id)^{-1}\Y_{obs}(X_i)$, $i \in \set{1,2}$, where $\Id$ stands for the identity matrix. Resulting covariances $\Cov{\M_i(x),\Y(x)}$, $\Cov{\M_i(x),\M_j(x)}$ and aggregated model $\M_\agg(x)$, $v_\agg(x)$ of Eq.~\eqref{eq:modelagg} are then easily obtained.
%
The resulting model is illustrated in Figure~\ref{fig:ex1}.
\begin{figure}
\centering
  \subfloat[sub-models to aggregate]{\includegraphics[height=5cm]{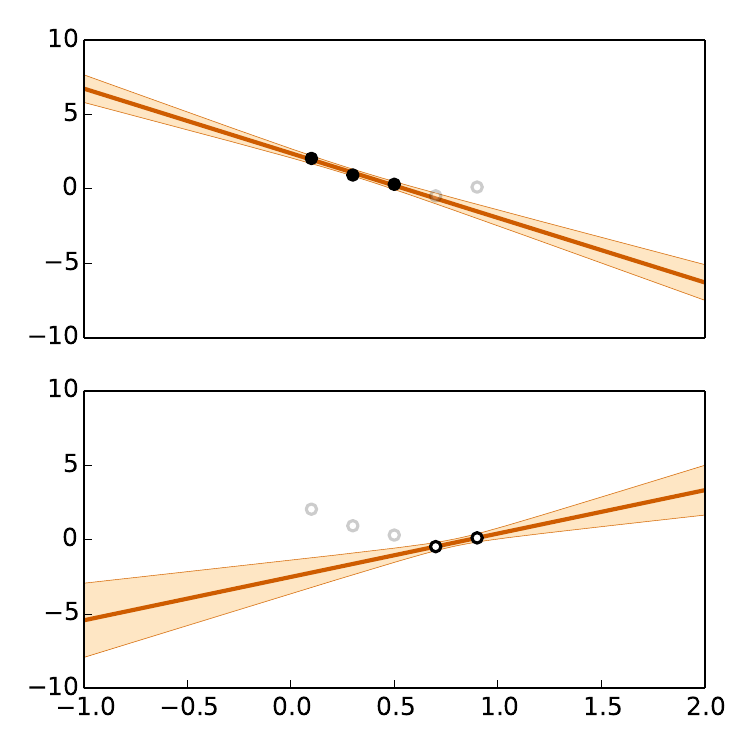}}
  \subfloat[aggregated model (solid lines) and full model (dashed lines)]{\includegraphics[height=5cm]{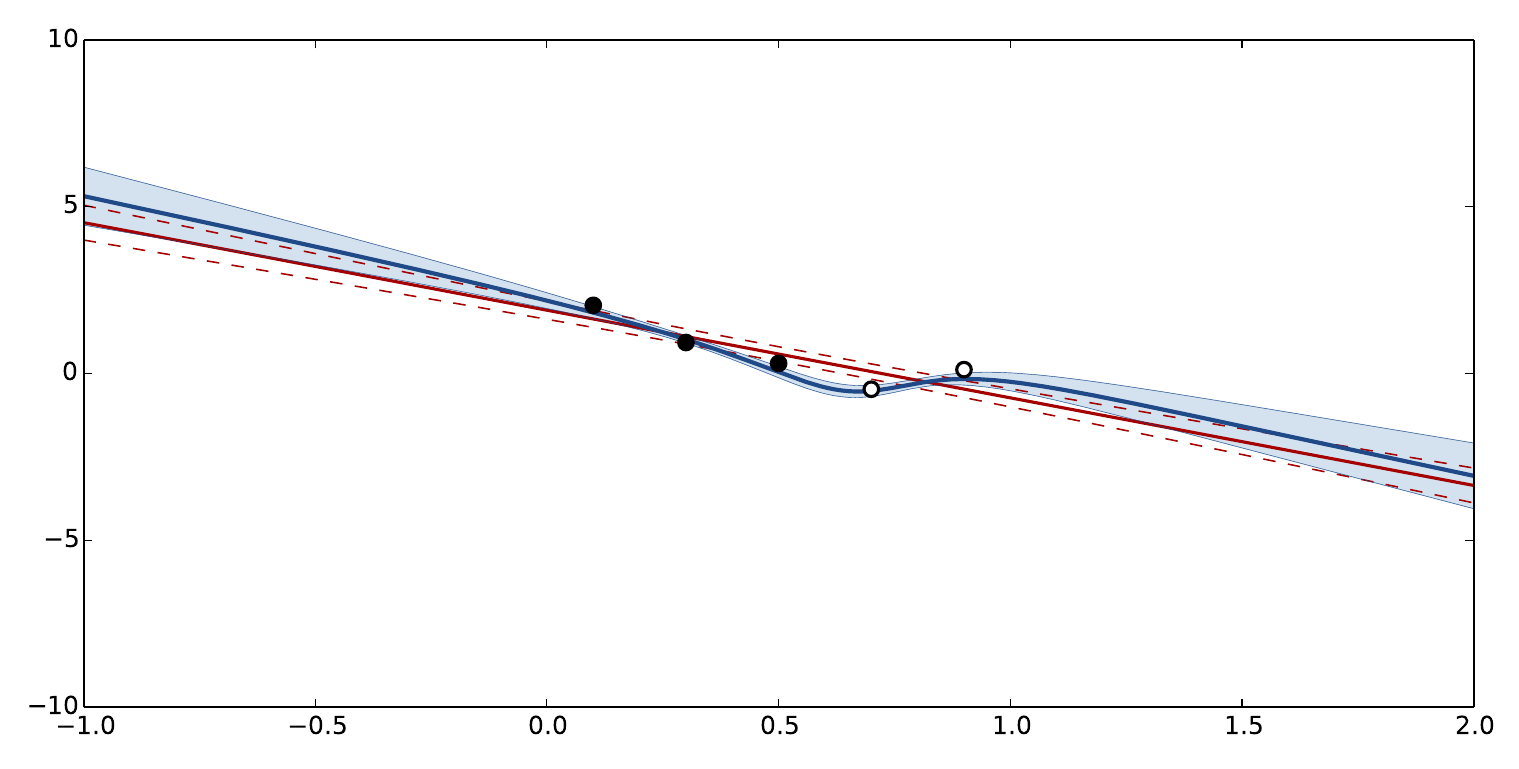}}
	\caption{Example of aggregation of two linear regression sub-models. Exhibited confidence bands correspond to a difference to mean value of two standard deviations.}
	\label{fig:ex1}
\end{figure}
\end{example}


\section{Iterative scheme}
\label{sec:3iterativescheme}

In the previous sections, we have seen how to aggregate sub-models $\M_1, \ldots, \M_p$ into one unique aggregated value $M_\agg$. Now, starting from the same sub-models, one can imagine creating several aggregated values, $M_{\agg_1}, \ldots, M_{\agg_\ns}$, each of them based on a subset of $\set{\M_1, \ldots, \M_p}$. One can show that these aggregated values  can themselves be aggregated. This makes possible the construction of an iterative algorithm that merges sub-models at successive steps, according to a tree structure. Such tree-based schemes are sometimes used to reduce the complexity of models, see \eg~\cite{tzeng2005fast}, or to allow parallel computing~\cite{doi:10.1080/15481603.2014.1002379}.\\

The aim of this section is to give a generic algorithm for aggregating sub-models according to a tree structure and to show that the choice of the tree structure helps partially reducing the complexity of the algorithm. It also aims at giving perspectives for further large reduction of the global complexity.

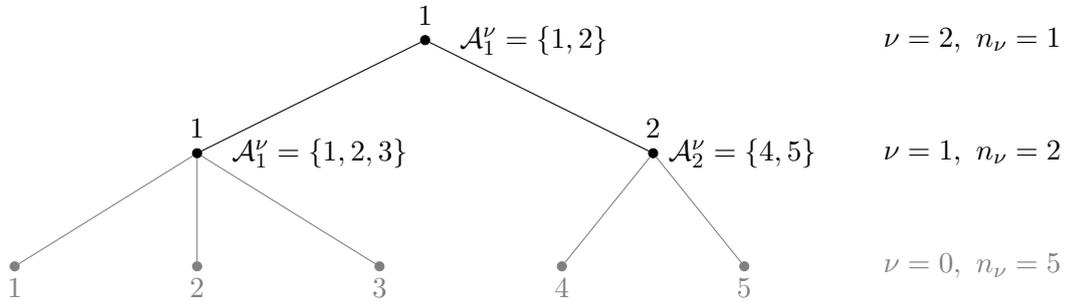
\begin{figure}
\begin{center}
\begin{tikzpicture}[]
  \tikzset{
    level 1/.style={level distance=15mm,sibling distance=60mm},
    level 2/.style={level distance=15mm,sibling distance=24mm},
    level 3/.style={level distance=15mm,sibling distance=12mm},
    solid node/.style={circle,draw,inner sep=1.2,fill=black},
    leave node/.style={circle,draw,inner sep=1.2,fill=gray,color=gray,edge from parent/.style={gray,draw}}
  }

  \node[solid node,label=above:{1}, label=right:{$\ \ \setI_1^\nu = \{ 1,2 \}$}]{}
    child{node[solid node,label=above:{1},label=right:{$\ \ \setI_1^\nu = \{ 1,2,3 \}$} ]{}
      child[leave node]{node[leave node,label=below:{$1$}]{}}
      child[leave node]{node[leave node,label=below:{$2$}]{}}
      child[leave node]{node[leave node,label=below:{$3$}]{}}
    }
    child{node[solid node,label = above:{2}, label=right:{$\setI_2^\nu = \{ 4,5 \}$}]{}
      child[leave node]{node[leave node,label=below:{$4$}]{} edge from parent[gray]}
      child[leave node]{node[leave node,label=below:{$5$}]{}
        child [grow=right] {node (inv) {} edge from parent[draw=none]
          child [grow=right] {node (a) {$\nu=0,\ n_\nu=5$} edge from parent[draw=none]
            child [text=black,grow=up] {node (b) {$\nu=1,\ n_\nu=2$} edge from parent[draw=none]
              child [text=black,grow=up] {node (c) {$\nu=2,\ n_\nu=1$} edge from parent[draw=none]
              }
            }
          }
        }
      }
    }
  ;
\end{tikzpicture}
\end{center}
\caption{One aggregation tree with height $\numax=2$, $n_0=5$ initial leave nodes (observation points) and $n_1=2$ sub-models.}
\label{fig:tree}
\end{figure}

\paragraph{}
Let us introduce some notations. The total height (i.e number of layers) of the tree is denoted $\numax$ and the number of node of a layer $\nu \in \set{1,\dots,\numax}$ is $n_\nu$. We associate to each node (say node~$i$ in layer~$\nu$) a sub-model $M_i^\nu$ corresponding to the aggregation of its child node sub-models. In other words, $M_i^\nu$ is the aggregation of $\set{M_k^{\nu-1}, \,k \in \agg_i^\nu}$ where $\agg_i^\nu$ is the set of children of node~$i$ in layer~$\nu$. These notations are summarized in Figure~\ref{fig:tree} which details the tree associated with Example~\ref{example:Kriging}. In practice, there will be one \emph{root node} ($n_{\numax}=1$) and each node will have at least one parent: $\cup_{i = 1,\dots,n_\nu} \setI_i^\nu = \set{ 1,\dots,n_{\nu-1}}$. Typically, the sets $\setI_i^\nu$, $i ={1,\ldots,n_\nu}$, are a partition of $\set{ 1,\dots,n_{\nu-1}}$ but this assumption is not required and a child node may have several parents (which can generate a lattice rather than a tree).

\subsection{Two-Layer aggregation}

\paragraph{}
We discuss in this section the tree structure associated \did{with} the case $\numax=2$ as per the previous examples. With such settings, the first step consists in calculating the initial sub-models $M^1_1, \ldots, M^1_p$ of the layer $\nu=1$ and the second one is to aggregate all sub-models of layer $\nu=1$ into one unique predictor $M^2_1$ (see for example Figure~\ref{fig:tree}). This aggregation is obtained by direct application of Definition~\ref{def:agg}.

\FloatBarrier

In practice the sub-models can be any covariates, like gradients, non-Gaussian underlying factors or even  black-box responses, as soon as cross-covariances and covariances with $Y(x)$ are known. When sub-models are calculated from direct observations $Y(x_1), \ldots, Y(x_n)$, the number of leave nodes at layer $\nu=0$ is $n_0=n$.
In further numerical illustrations of Section~\ref{sec:5discussion}, the sub-models $\M_i^1$ are simple Kriging predictors of $\Y(x)$, with for $i=1,\ldots, p$, 
\begin{equation}\label{eq:Krigingsub-modelsIterative}
\accolade{
M_i^{1}(x) &=& k(x,X_i) k(X_i, X_i)^{-1} Y(X_i) \virguleacc\\
\Cov{M_i^{\did{1}}(x), Y(x)} &=& k(x,X_i) k(X_i, X_i)^{-1} k(X_i,x) \virguleacc\\
\Cov{M_i^{\did{1}}(x), M_j^{\did{1}}(x)} &=& k(x,X_i) k(X_i, X_i)^{-1} k(X_i, X_j) k(X_j, X_j)^{-1} k(X_j,x)\pointacc
}
\end{equation}
With these particular simple Kriging initial sub-models, the layer $\nu=1$ corresponds to the aggregation of covariates $\M_i^0(x)=\Y(x_i)$ at the previous layer $\nu=0$, $i=1,\ldots, n$.

\subsection{Multiple Layer aggregation}

In order to extend the two-layer settings, one needs to compute covariances among aggregated sub-models. The following proposition gives covariances between aggregated models of a given layer.

\begin{proposition}[aggregated models covariances]\label{prop:covariancesM}
Let us consider a layer $\nu \ge 1$ and given aggregated models $M_1^\nu(x), \ldots, M_{n_{\nu}}^\nu(x)$. Assume that the following covariances $(k^\nu(x))_i =\Cov{M_i^\nu(x), Y(x)}$ and $(K^\nu(x))_{ij}=\Cov{M_i^\nu(x), M_j^\nu(x)}$ are given, $i,j \in \set{1, \ldots, n_\nu}$.
Let $n_{\nu+1} \ge 1$ be a number of new aggregated values. Consider subsets $\agg_i^{\nu+1}$ of $\set{1, \ldots, n_\nu}$, $i =1, \ldots, n_{\nu+1}$, and assume that $M_{i}^{\nu+1}(x)$ is the aggregation of $M_k^\nu(x)$, $k \in \agg_i^{\nu+1}$. Then
\begin{equation}\label{eq:tripletEtapeSuivante}
\accolade{
(M^{\nu+1}(x))_i &=& \transpose{\alpha_i^{\nu+1}(x)} \left( M^{\nu}(x)_{[\setI_i^{\nu+1}]} \right) \virguleacc\\
\Cov{M_i^{\nu+1}(x), Y(x)} &=&  \transpose{\alpha_i^{\nu+1}(x)} \left( k^{\nu}(x)_{[\setI_i^{\nu+1}]} \right) \virguleacc\\
\Cov{M_i^{\nu+1}(x), M_j^{\nu+1}(x)} &=& \transpose{\alpha_i^{\nu+1}(x)} \left( K^{\nu}(x)_{[\setI_i^{\nu+1},\setI_j^{\nu+1}]} \right)  \alpha_j^{\nu+1}(x) \virguleacc
}
\end{equation}
where the vectors of optimal weights are
$\alpha_i^{\nu+1}(x) = \left( K^{\nu}_{[\setI_i^{\nu+1},\setI_i^{\nu+1}]} \right)^{-1} \left( k^{\nu}(x)_{[\setI_i^{\nu+1}]} \right)$ and where $k^\nu(x)_{[\setI_i^{\nu+1}]}$ corresponds to the sub-vector of $k^\nu(x)$ of indices in $\setI_i^{\nu+1}$ and similarly for $\M^\nu(x)_{[\setI_i^{\nu+1}]}$ and the submatrix $K^\nu(x)_{[\setI_i^{\nu+1},\setI_i^{\nu+1}]}$, which is assumed to be invertible.\\
Furthermore, $\Cov{M_i^{\nu+1}(x), Y(x)}=\Cov{M_i^{\nu+1}(x), M_i^{\nu+1}(x)}$.
\end{proposition}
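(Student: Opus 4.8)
The plan is to observe that Proposition~\ref{prop:covariancesM} is nothing but Definition~\ref{def:agg} applied one layer up, with the role of the ``sub-models'' now played by $M^\nu_k(x)$, $k \in \setI^{\nu+1}_i$, and the covariance data $\kxM$, $\KM$ replaced by the sub-vector $k^\nu(x)_{[\setI^{\nu+1}_i]}$ and the sub-matrix $K^\nu(x)_{[\setI^{\nu+1}_i,\setI^{\nu+1}_i]}$. The latter are exactly $\Cov{M^\nu_k(x),Y(x)}$ and $\Cov{M^\nu_k(x),M^\nu_l(x)}$ restricted to indices in $\setI^{\nu+1}_i$, which are the quantities hypothesised to be known. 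Hence the first line of Eq.~\eqref{eq:tripletEtapeSuivante} is simply the instantiation of Eq.~\eqref{eq:agg}, and the formula for $\alpha^{\nu+1}_i(x)$ is its coefficient vector as identified in the BLUE proposition; this also makes explicit that $M^{\nu+1}_i(x)$ is a deterministic linear combination of the $M^\nu_k(x)$, $k\in\setI^{\nu+1}_i$.

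Next I would establish the second and third lines by bilinearity of the covariance. Writing $M^{\nu+1}_i(x) = \transpose{\alpha^{\nu+1}_i(x)}\big(M^\nu(x)_{[\setI^{\nu+1}_i]}\big)$ and, symmetrically, $M^{\nu+1}_j(x) = \transpose{\alpha^{\nu+1}_j(x)}\big(M^\nu(x)_{[\setI^{\nu+1}_j]}\big)$, one gets
\begin{equation}
\Cov{M^{\nu+1}_i(x),Y(x)} = \transpose{\alpha^{\nu+1}_i(x)}\,\Cov{M^\nu(x)_{[\setI^{\nu+1}_i]},Y(x)} = \transpose{\alpha^{\nu+1}_i(x)}\big(k^\nu(x)_{[\setI^{\nu+1}_i]}\big),
\end{equation}
and likewise
\begin{equation}
\Cov{M^{\nu+1}_i(x),M^{\nu+1}_j(x)} = \transpose{\alpha^{\nu+1}_i(x)}\,\Cov{M^\nu(x)_{[\setI^{\nu+1}_i]},M^\nu(x)_{[\setI^{\nu+1}_j]}}\,\alpha^{\nu+1}_j(x) = \transpose{\alpha^{\nu+1}_i(x)}\big(K^\nu(x)_{[\setI^{\nu+1}_i,\setI^{\nu+1}_j]}\big)\alpha^{\nu+1}_j(x).
\end{equation}
These are purely formal manipulations once one knows that $\alpha^{\nu+1}_i(x)$ is deterministic, which is guaranteed by the invertibility assumption on $K^\nu(x)_{[\setI^{\nu+1}_i,\setI^{\nu+1}_i]}$.

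For the last identity $\Cov{M^{\nu+1}_i(x),Y(x)} = \Cov{M^{\nu+1}_i(x),M^{\nu+1}_i(x)}$, I would substitute the explicit optimal weight $\alpha^{\nu+1}_i(x) = \big(K^\nu(x)_{[\setI^{\nu+1}_i,\setI^{\nu+1}_i]}\big)^{-1}\big(k^\nu(x)_{[\setI^{\nu+1}_i]}\big)$ into both expressions just derived. The variance term becomes $\transpose{(k^\nu_{[\cdot]})}\,(K^\nu_{[\cdot,\cdot]})^{-1}K^\nu_{[\cdot,\cdot]}(K^\nu_{[\cdot,\cdot]})^{-1}(k^\nu_{[\cdot]}) = \transpose{(k^\nu_{[\cdot]})}(K^\nu_{[\cdot,\cdot]})^{-1}(k^\nu_{[\cdot]})$, which is exactly the covariance-with-$Y$ term $\transpose{\alpha^{\nu+1}_i(x)}(k^\nu_{[\cdot]})$. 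Alternatively — and perhaps more cleanly — this identity is the layer-$(\nu+1)$ version of the $\kxM = \mathrm{diag}(\KM)$-type relation: it already holds at layer $\nu=0$ since $\Cov{Y(x_i),Y(x)} = \Cov{M^0_i(x),M^0_i(x)}$ when $M^0_i(x)=Y(x_i)$ evaluated appropriately, and the computation above shows it propagates upward. I would present the direct substitution argument as the main proof and mention the inductive reading as a remark.

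I do not expect any genuine obstacle here: the proposition is essentially a bookkeeping statement that Definition~\ref{def:agg} is self-similar across layers, and the only thing to be careful about is indexing — making sure the restriction notation $[\setI^{\nu+1}_i]$ is applied consistently to vectors and matrices, and that the weights $\alpha^{\nu+1}_i(x)$ and $\alpha^{\nu+1}_j(x)$ use the correct (possibly different) index sets on the two sides of the cross-covariance. The mild point worth stating explicitly is that $\alpha^{\nu+1}_i(x)$ being deterministic (not random) is what licenses pulling it out of the covariance; this follows because $k^\nu(x)$ and $K^\nu(x)$ are deterministic covariance quantities, not random variables.
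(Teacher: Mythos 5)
Your proposal is correct and follows essentially the same route as the paper: the proposition is just Definition~\ref{def:agg} instantiated one layer up, the covariance formulas follow from bilinearity with deterministic weights, and the final identity $\Cov{M_i^{\nu+1}(x),Y(x)}=\Cov{M_i^{\nu+1}(x),M_i^{\nu+1}(x)}$ is obtained by substituting the optimal weight vector, exactly as in the paper's (much terser) proof. Your write-up merely makes explicit the bookkeeping that the paper leaves implicit.
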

\begin{proof}
This follows immediately from Definition~\ref{def:agg}: as the aggregated values are linear expressions, the calculation of their covariances is straightforward. The last equality is simply obtained by inserting the value of $\alpha_i^{\nu+1}(x)$ into the expression of $\Cov{M_i^{\nu+1}(x), M_i^{\nu+1}(x)}$.
\end{proof}

The following algorithm, which is a generic algorithm for aggregating sub-models according to a tree structure, is based on an iterative use of the previous proposition. It is given for one prediction point $x\in D$ and it assumes that the sub-models are already calculated, starting directly from layer 1. This allows a large variety of sub-models, and avoids the storage of the possibly large covariance matrix $K^{0}(x)$.
Its outputs are the final scalar aggregated model, $M_{\numax}(x)$, and the scalar covariance $K_{\numax}(x)$ from which one deduces the prediction error $\Esp{(Y(x)-M_{\numax}(x))^2}=k(x,x)-K_{\numax}(x)$.\\

In order to give dimensions in the algorithm and to ease the calculation of complexities, we define $c_i^\nu$ as the number of children of the sub-model $M_i^\nu$, $c_i^\nu=\mathrm{card} \, \setI_i^\nu$. We also denote $c_{\max} = \mathop{\max}\limits_{\nu, i}c_i^\nu$ the maximal number of children.\\

\begin{algorithm}[H]
\DontPrintSemicolon
\CommentSty{\color{blue}}

\SetKwInOut{Input}{inputs}
\SetKwInOut{Output}{outputs}
\Input{$M_1$, vector of length $ n_1$ (sub-models evaluated at $x$) \newline
$k_1$, vector of length $ n_1$ (covariance between $\Y(x)$ and sub-models at $x$) \newline
$K_1$, matrix of size $ n_1 \times n_1$ (covariance between sub-models at $x$) \newline
$\setI$, a list describing the tree structure}

\Output{$M_{\numax}$, $K_{\numax}$}
\medskip

\textcolor{gray}{Create vectors $M$, $k$ of size $c_{\max}$ and matrix $K$ of size $c_{\max} \times c_{\max}$ \;}

\For{$\nu=2, \ldots, \numax$}
{
\textcolor{gray}{Create vectors $M_{\nu}$ of size $n_{\nu}$ and matrix $K_{\nu}$ of size $n_{\nu} \times n_{\nu}$ \;}

\For{$i=1, \ldots, n_{\nu}$}
{
\textcolor{gray}{Create vector $\alpha_i$ of size $c_i^{\nu}$\;}
$M \leftarrow$ subvector of $M_{\nu-1}$ on $\setI_i^\nu$\;
$K \leftarrow$ submatrix of $K_{\nu-1}$ on $\setI_i^\nu$\;
\KwSty{if } $\nu=2$ \KwSty{ then } $k \leftarrow k_1$ \KwSty{else } $k \leftarrow \mathrm{Diag}(K)$ \;
$\alpha_i \leftarrow K^{-1} k$\;
$M_{\nu}[i] \leftarrow  \transpose{(\alpha_i)} M$\;
$K_{\nu}[i,i] \leftarrow  \transpose{(\alpha_i)} k$ \;
\For{$j=1, \ldots, i-1$}
{
$K \leftarrow$ submatrix of $K_{\nu-1}$ on $\setI_i^\nu \times \setI_j^\nu$\;
$K_{\nu}[i,j] \leftarrow \transpose{(\alpha_i)} K \alpha_j$\;
$K_{\nu}[j,i] \leftarrow K_{\nu} [i,j]$\;
} 
} 
\textcolor{gray}{$M_{\nu-1}$, $K_{\nu-1}$ and all $\alpha_i$ can be deleted}
} 
\caption{Nested Kriging algorithm}
\label{algoKrigIteratif2}
\end{algorithm}

\paragraph{}
Notice that Algorithm~\ref{algoKrigIteratif2} uses the result $(K^{\nu+1}(x))_{ii} = (k^{\nu+1}(x))_i$ from Prop.~\ref{prop:covariancesM}: when we consider aggregated models ($\nu \ge 2$), we do not need to store and compute the vector $k^{\nu}(x)$ any more. When $\nu=1$, depending on the initial covariates, $\Cov{M^1_i(x), Y(x)}$ is not necessarily equal to  $\Cov{M^1_i(x), M^1_i(x)}$ (this is however the case when $M^1_i(x)$ are simple Kriging predictors).\\

For the sake of clarity, some improvements have been omitted in the algorithm above. For instance, covariances can be stored in triangular matrices, one can store two couples $(M_{\nu},K_{\nu})$ instead of $\numax$ couples by using objects $M_{(\nu\mod 2)}$ and $K_{(\nu\mod 2)}$. Furthermore, it is quite natural to adapt this algorithm to parallel computing, but this is out of the scope of this article.

\subsection{Complexity}

We study here the complexity of Algorithm~\ref{algoKrigIteratif2} in space (storage footprint) and in time (execution time). For the sake of clarity we consider in this paragraph a simplified tree where $n_\nu$ is decreasing in $\nu$ and each child has only one parent. This corresponds to the most common structure of trees, without overlapping. Furthermore, at any given level $\nu$, we consider that each node has the same number of children: $c_i^{\nu}=c_\nu$ for all $i=1, \ldots, n_\nu$. Such a tree will be called \textit{regular}. In this setting, one easily sees that $n_\nu= \frac{n_{\nu-1}}{c_\nu}= \frac{n}{c_1\ldots c_\nu} $, $\nu \in \set{1, \ldots, \numax}$. Complexities obviously depend on the choice of sub-models, we give here complexities for Kriging sub-models as in Eq.~\eqref{eq:Krigingsub-modelsIterative}, but this can be adapted to other \did{kinds} of sub-models.\\

For one prediction point $x \in D$, we denote by $\mathcal{S}$ the storage footprint of Algorithm~\ref{algoKrigIteratif2}, and by $\CalgoTot$ its complexity in time, including sub-models calculation. One can show that in a particular two-layers setting with $\sqrt{n}$ sub-models ($\numax=2$ and $c_1=c_2=\sqrt{n}$), a reachable global complexity for $q$ prediction points is (see assumptions below and expression details in the proof of Proposition~\ref{prop:complexities})
\begin{equation}
\mathcal{S}=O(n) \boxand q \CalgoTot  = O(n^2 q) \, .
\end{equation}
This is to be compared with $O(n^3)+O(n^2 q)$ for the same prediction with the full model. The  aggregation of sub-models can be useful when the number of prediction points is smaller than the number of observations. Notice that the storage needed for $q$ prediction points is the same as for one prediction point, but in some cases (as for leave-one-out errors calculation), it is worth using a $O(nq)$ storage to avoid recalculations of some quantities.\\

We now detail chosen assumptions on the calculation of $\mathcal{S}$ and $\CalgoTot$, and study the impact of the tree structure on these quantities.
For one prediction point $x \in D$, including sub-models calculation, the complexity in time can be decomposed into $\CalgoTot = \Ccov+\CalgoAlpha+\CalgoBeta$, where
\begin{itemize}
\item[-] $\Ccov$ is the complexity for computing all cross covariances among initial design points, which does not depend on the tree structure (neither on the number of prediction points).
\item[-] $\CalgoAlpha$ is the complexity for building all aggregation predictors, \ie the sum over $\nu, i$ of all operations in the $i$-loop in Algorithm~\ref{algoKrigIteratif2} (excluding operations in the $j$-loop).
\item[-] $\CalgoBeta$ is the complexity for building the covariance matrices among these predictors, \ie the  sum over $\nu, i, j$ of all operations in the $j$-loop in Algorithm~\ref{algoKrigIteratif2}.
\end{itemize}

We assume here that there exists two constants $\alpha>0$ and $\beta>0$ such that the complexity of operations inside the $i$-loop (excluding those of the $j$-loop) is $\alpha c_{\nu}^3$, and the complexity of operations inside the $j$-loop is $\beta c_{\nu}^2$. Despite perfectible, this assumption follows from the fact that one usually considers that the complexity of $c_\nu \times c_\nu$ matrix inversion is $O(c_\nu^3)$ and the complexity of matrix-vector multiplication is $O(c_\nu^2)$. We also assume that the tree height $\numax$ is finite, and that all numbers of children $c_\nu$ tend to $+\infty$ as  $n$ tends to $+\infty$. This excludes for example binary trees, but makes assumptions on complexities more reliable. Under these assumptions, the following proposition details how the tree structure affects the complexities.
\begin{proposition}[Complexities]\label{prop:complexities}
The following storage footprint $\mathcal{S}$ and complexities $\CalgoAlpha$, $\CalgoBeta$ hold for the respective tree structures, when the number of observations $n$ tends to $\infty$.
\begin{itemize}
\item[(i)] The two-layer equilibrated $\sqrt{n}$-tree, where $p=c_1=c_2=\sqrt{n}$, $\numax=2$, is the optimal storage footprint tree, and
\begin{equation}
\mathcal{S}=O(n) \,, \qquad \CalgoAlpha \sim \alpha n^{2}\,, \qquad \CalgoBeta \sim \frac{\beta}{2} n^2\,.
\end{equation}
\item[(ii)] The $\numax$-layer equilibrated $\sqrt[\numax]{n}$-tree, where $c_1=\dots=c_{\numax}=\sqrt[\numax]{n}$, $\numax \ge 2$, is such that
\begin{equation}
\mathcal{S}=O(n^{2-2/\numax}) \,, \qquad \CalgoAlpha \sim \alpha n^{1+\frac{2}{\numax}} \,, \qquad \CalgoBeta \sim \frac{\beta}{2} n^2\,.
\end{equation}
\item[(iii)] The optimal complexity tree is defined as the regular tree structure that minimizes $\CalgoAlpha$, as it is not possible to reduce  $\CalgoBeta$ to orders lower than $O(n^2)$. This tree is such that
\begin{equation}
\mathcal{S}=O\left(n^{2-\frac{1}{\delta^{\numax}-1}}\right)  \,, \qquad \CalgoAlpha \sim \gamma \alpha  n^{1+\frac{1}{\delta^{\numax}-1}} \,, \qquad \CalgoBeta \sim \frac{\beta}{2}n^2 \,,
\end{equation}
with $\delta=\frac{3}{2}$ and $\gamma=\frac{27}{4}\delta^{-\frac{\numax}{\delta^{\numax}-1}}\left( 1-\delta^{-\numax}\right)$. This tree is obtained for $c_\nu = \delta \left( \delta^{-\numax} n\right)^{\frac{\delta^{(\nu-1)}}{2(\delta^{\numax}-1)}}$, $\nu=1, \ldots, \numax$. In a particular two-layers setting one gets $c_1=\left(\frac{3}{2}\right)^{1/5} n^{2/5}$ and $c_2=\left(\frac{3}{2}\right)^{-1/5} n^{3/5}$, which leads to $\CalgoAlpha = \gamma \alpha n^{9/5}$ and $\CalgoBeta= \frac{\beta}{2} n^2  - \frac{\beta}{2} \left(\frac{3}{2}\right)^{\frac{1}{5}}n^{\frac{7}{5}}$, where $\gamma=(\frac{2}{3})^{-2/5}+(\frac{2}{3})^{3/5}\simeq 1.96$.
\end{itemize}
\end{proposition}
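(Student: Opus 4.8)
The plan is to read off from Algorithm~\ref{algoKrigIteratif2} the cost incurred at each layer and then sum over the $\numax$ layers, using $n_\nu = n/(c_1\cdots c_\nu)$. Write $P_\nu = c_1\cdots c_\nu$, so that $P_0=1$, $P_\nu = P_{\nu-1}c_\nu$ and $P_{\numax}=n$. At a layer $\nu\ge 2$ the body of the $i$-loop performs, per node, one $c_\nu\times c_\nu$ inversion together with $O(c_\nu)$ extra work, i.e.\ cost $\alpha c_\nu^3$, over $n_\nu$ nodes; building the $n_1$ initial Kriging predictors costs $\alpha c_1^3$ each. Hence
\[
\CalgoAlpha \;=\; \alpha\sum_{\nu=1}^{\numax} n_\nu c_\nu^3 \;=\; \alpha n\sum_{\nu=1}^{\numax}\frac{c_\nu^2}{P_{\nu-1}} \;=\; \alpha n\sum_{\nu=1}^{\numax}\frac{P_\nu^2}{P_{\nu-1}^3}.
\]
Similarly, the $j$-loop costs $\beta c_\nu^2$ per off-diagonal pair, and there are $n_\nu(n_\nu-1)/2$ pairs, while assembling the layer-$1$ covariance matrix $K^1(x)$ costs $\beta c_1^2$ per entry; using $n_\nu^2c_\nu^2=n^2/P_{\nu-1}^2$ this yields
\[
\CalgoBeta \;=\; \frac{\beta}{2}\sum_{\nu=1}^{\numax}(n_\nu^2-n_\nu)c_\nu^2 \;=\; \frac{\beta}{2}\sum_{\nu=1}^{\numax}\Bigl(\frac{n^2}{P_{\nu-1}^2}-\frac{nc_\nu}{P_{\nu-1}}\Bigr).
\]
The stored objects are the matrices $K^\nu(x)$ (sizes $n_\nu^2$), the $c_{\max}\times c_{\max}$ working block, and the $n_1$ layer-$1$ weight vectors of length $c_1$ (total $n_1c_1=n$); since $\numax$ is finite and $n_\nu$ decreases, $\mathcal{S}=O(n_1^2+c_{\max}^2+n)$.

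From these three formulas the $\CalgoBeta$ estimates are immediate: because every $c_\nu\to\infty$, only the $\nu=1$ term survives at order $n^2$, so $\CalgoBeta\sim\tfrac{\beta}{2}n^2$; in a two-layer tree $n_2=1$ and $\CalgoBeta=\tfrac{\beta}{2}(n^2-nc_1)$ exactly, which gives the refined expression in (iii) once $c_1$ is inserted. For (i) one substitutes $c_1=c_2=\sqrt n$: then $\CalgoAlpha=\alpha(n^2+n^{3/2})\sim\alpha n^2$ and $n_1^2=c_{\max}^2=n$, so $\mathcal{S}=O(n)$; optimality holds because any regular tree has $n_1c_1=n$, hence $\mathcal{S}\ge\max(n_1^2,c_1^2)\ge n$, a bound this tree attains. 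For (ii) one substitutes $c_\nu=n^{1/\numax}$: the sum $\alpha n\sum_\nu n^{(3-\nu)/\numax}$ is dominated by its $\nu=1$ term, giving $\CalgoAlpha\sim\alpha n^{1+2/\numax}$, while $n_1^2=n^{2-2/\numax}$ dominates both $c_{\max}^2=n^{2/\numax}$ and $n$ for $\numax\ge 2$, so $\mathcal{S}=O(n^{2-2/\numax})$.

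The substantive step is (iii): for a fixed height $\numax$, minimise $g=\sum_{\nu=1}^{\numax}P_\nu^2/P_{\nu-1}^3$ over $P_1,\dots,P_{\numax-1}>0$ subject to $P_0=1$, $P_{\numax}=n$. In the variables $p_\nu=\log P_\nu$ the objective $g=\sum_\nu e^{2p_\nu-3p_{\nu-1}}$ is convex, so its unique stationary point is the global minimum; the stationarity equations $2e^{2p_\nu-3p_{\nu-1}}=3e^{2p_{\nu+1}-3p_\nu}$ say exactly that $a_\nu:=c_\nu^2/P_{\nu-1}=P_\nu^2/P_{\nu-1}^3$ obeys $a_{\nu+1}=a_\nu/\delta$ with $\delta=\tfrac32$, hence $a_\nu=a_1\delta^{-(\nu-1)}$. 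Then $P_\nu=P_{\nu-1}\sqrt{a_\nu P_{\nu-1}}$ turns into the linear recurrence $p_\nu=\tfrac32 p_{\nu-1}+\tfrac12\log a_\nu$ with $p_0=0$, which solves in closed form; imposing $p_{\numax}=\log n$ pins down $a_1=\delta^2(\delta^{-\numax}n)^{1/(\delta^{\numax}-1)}$, and back-substitution gives $c_\nu=\sqrt{a_\nu P_{\nu-1}}=\delta(\delta^{-\numax}n)^{\delta^{\nu-1}/(2(\delta^{\numax}-1))}$ (note $\sum_{\nu=1}^{\numax}\delta^{\nu-1}=2(\delta^{\numax}-1)$, so $\prod_\nu c_\nu=n$). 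Summing the geometric series, $\CalgoAlpha=\alpha n\sum_{\nu=1}^{\numax}a_\nu=\alpha n\,a_1\frac{1-\delta^{-\numax}}{1-\delta^{-1}}=3\delta^2(1-\delta^{-\numax})\delta^{-\numax/(\delta^{\numax}-1)}\,\alpha n^{1+1/(\delta^{\numax}-1)}=\gamma\alpha n^{1+1/(\delta^{\numax}-1)}$ with $\gamma=\tfrac{27}{4}\delta^{-\numax/(\delta^{\numax}-1)}(1-\delta^{-\numax})$; and $\mathcal{S}=O(n_1^2)=O(n^2/a_1)=O(n^{2-1/(\delta^{\numax}-1)})$, after noting that $c_{\max}=c_{\numax}$ makes $c_{\max}^2$ of the same or lower order. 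The explicit two-layer constants follow by setting $\numax=2$, $\delta^{\numax}-1=\tfrac54$.

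The hard part is the optimisation in (iii): one has to recognise the logarithmic change of variables that makes $g$ convex, identify and solve in closed form the linear recurrence governing $p_\nu$, and then do the geometric-series bookkeeping carefully enough to recover the exact constant $\gamma$ and the double-exponential exponents $\delta^{\nu-1}$ appearing in $c_\nu$. Everything else -- parts (i) and (ii), the $\CalgoBeta$ asymptotics, and the $\Omega(n)$ storage lower bound -- is routine substitution into the per-layer cost model established in the first paragraph.
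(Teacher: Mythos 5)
Your proposal is correct and rests on the same per-layer cost model as the paper's proof: the identical decompositions $\CalgoAlpha=\alpha\sum_\nu n_\nu c_\nu^3$ and $\CalgoBeta=\tfrac{\beta}{2}\sum_\nu n_\nu(n_\nu-1)c_\nu^2$ (with the $\nu=1$ terms covering sub-model construction), the same direct substitutions for (i) and (ii), and the same asymptotic bookkeeping for storage. The one genuine divergence is in the optimization step of (iii). The paper works directly in the variables $c_1,\dots,c_{\numax}$ with a Lagrange multiplier for the constraint $c_1\cdots c_{\numax}=n$ and extracts the recurrence $3c_{k+1}^2=2c_k^3$ from $\xi(k)=\xi(k+1)=0$; you instead pass to the partial products $P_\nu=c_1\cdots c_\nu$ in logarithmic coordinates, which turns the constrained problem into an unconstrained minimization of the convex function $\sum_\nu e^{2p_\nu-3p_{\nu-1}}$, and your stationarity condition $a_{\nu+1}=a_\nu/\delta$ is exactly the paper's recurrence. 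The two routes land on the same $c_\nu$, the same geometric series for $\CalgoAlpha$, and the same constant $\gamma$; what your version buys is a justification that the stationary point is in fact the global minimum (via convexity), a point the paper's first-order Lagrange computation leaves implicit, and a slightly cleaner closed-form solve of the recurrence. Your short argument for the storage optimality claim in (i) ($n_1c_1=n$ forces $\max(n_1^2,c_1^2)\ge n$) likewise supplies a detail the paper only asserts.
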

\begin{proof}
The details of the proof are given in Appendix~\ref{app:complexities}.
\end{proof}

We have seen that for $q$ prediction points and $n$ observations, a reachable complexity of the algorithm is $O(n^2 q)$, which is less than  $O(n^3)+O(n^2 q)$ for the same prediction with the full model, when $q<n$.

More precisely, we have shown that the choice of the tree structure helps partially reducing the complexity of the algorithm. Indeed, a large tree height $\numax$ largely reduces the complexity $\CalgoAlpha$ of matrix inversions in the algorithm. However, $\CalgoBeta$ cannot be reduced and  one can expect a maximal complexity reduction factor of $\frac{\beta}{2\alpha + \beta}$ when using an optimal tree, compared to the  equilibrated two-layers $\sqrt{n}$-tree. One shall however keep in mind that a lower complexity can lead to larger prediction errors or larger storage footprint.

As a perspective, approximating cross-covariances between aggregated models would allow to reduce $\CalgoBeta$ to the same order as $\CalgoAlpha$, which approaches $O(n)$ when $\numax$ is large. This thus gives perspectives for further large reduction of the global complexity, which are let to future work.

At last, several parts of the algorithm can be computed in parallel execution threads. This is an interesting feature since sub-models computation at any layer can also be distributed.


\section{Parameter estimation}
\label{sec:6estimation}

\paragraph{}
Consider a set of covariance functions $ \{ \sigma^2 k_{\theta} , \sigma^2 \geq 0, \theta \in \Theta\}$  where $k_\theta$ is a correlation function from $D \times D$ into  $[-1,1]$ depending on some parameters $\theta$ such as length-scales. In this section, we address the problem of selecting the value of $\sigma^2$ and $\theta$ from the input observation points in $X$ and the observation vector $f(X)$.
The mean predictor $m_\agg$ depends only on $\theta$ so it will be written $m_{\agg,\theta}$.
The prediction variance is a function of both $\theta$ and $\sigma^2$. Since it is linear in the latter, the prediction variance is written $ \sigma^2 v_{\agg,\theta}$.

\paragraph{}
For $1 \leq i \leq n$, let the leave-one-out mean
$m_{\agg,\theta,-i}(x_i)$
be computed as $m_{\agg,\theta}(x_i)$, but with $X,f(X)$ replaced by $X_{-i},f(X_{-i})$, where $X_{-i}$ is obtained by removing the $i$th line of $X$. Note that the input division $X_1,...,X_p$ is left unchanged, apart from removing $x_i$ when it appears in $X_1,...,X_p$. Similarly, the tree structure $ \set{\setI_i^\nu }$ is left unchanged. We define
$\sigma^2 v_{\agg,\theta,-i}(x_i)$
similarly to $\sigma^2 v_{\agg,\theta}(x_i)$.
\paragraph{}
We estimate $\sigma^2$ and $\theta$ with a two-step leave-one-out procedure similar to that of \cite{bachoc13cross}. We first select $\theta$ as minimizing the leave-one-out mean square error:
\begin{equation}   \label{eq:optim:loo}
\widehat{\theta} \in
\underset{\theta \in \Theta}{\operatorname{argmin}}
\frac{1}{n}
\sum_{i=1}^n \left(  f(x_i) - m_{\agg,\theta,-i}(x_i) \right)^2.
\end{equation}
Second, we set $\sigma^2$ so that the
leave-one-out errors have variance one:
\begin{equation}
\widehat{\sigma}^2 = \frac{1}{n} \sum_{i=1}^n
\frac{
\left(  f(x_i) - m_{\agg,\widehat{\theta },-i}(x_i) \right)^2
}
{
 v_{\agg,\widehat{\theta },-i}(x_i).
}
\end{equation}

We implemented an algorithm that computes, for a given covariance parameter $\theta$, the quantities $m_{\agg,\theta,-i}(x_i)$ and $v_{\agg,\theta,-i}(x_i)$ for $q$ different points $x_i$.
If the proper storage and precomputations are made, the computational cost
is of $O(q n^2)$, which is similar to the cost for predicting at $q$ new locations using the model aggregation procedure presented in this paper.
However using precomputations, the algorithm also has a storage cost of $O(nq)$ which excludes
using $q=n$ in the case where $n$ is large and prevents computing
the right-hand side of Equation~\eqref{eq:optim:loo} exactly.
Finally, one may notice that when $q$ points are chosen uniformly, without replacement,  in the set of all
$n$ points, averaging $q$
leave-one-out mean square error yields an unbiased estimate
of the leave-one-out mean square error,
and can be seen as an approximation of the latter.
We thus propose to solve the optimization problem \eqref{eq:optim:loo} with a stochastic gradient descent algorithm described in Chapter 5 of \cite{bhatnagar13stochastic}. At each step of the gradient descent, the projection of the gradient of \eqref{eq:optim:loo} on a random direction is approximated by a finite difference. The algorithm is as follows.

\vspace{0.5cm}

\begin{algorithm}[H]
\DontPrintSemicolon
\CommentSty{\color{blue}}

\SetKwInOut{Input}{inputs}
\SetKwInOut{Output}{outputs}
\Input{$\theta_0$, initial value of $\theta$ \newline
$(a_i)_{i \in \mathbb{N}}$, sequence of increment terms for the gradient descent \newline
$(\delta_i)_{i \in \mathbb{N}}$, sequence of step sizes for the finite differences \newline
$q$, number of leave-one-out predictions \newline
$n_{iter}$, maximal number of iterations}

\Output{$\hat{\theta}$}
\medskip

\For{$i = 1,...,n_{iter}$}
{
Sample a subset $\mathcal{I}_i$ of $\{1,...,n\}$, uniformly over all the subsets of $\{1,...,n\}$ with cardinality $q$. \;

Sample a $m$-dimensional vector $h_i$ from a $m$-dimensional random vector with independent components, each of them taking the values $1$ and $-1$ with probabilities $1/2$.

Let
\[
\Delta_i = \frac{1}{2 \delta_i}
\left(
\frac{1}{q}
\sum_{j \in \mathcal{I}_i} \left(  f(x_j) - m_{\agg,-j,\theta_{i-1} + \delta_i h_i}(x_j) \right)^2
-
\frac{1}{q}
\sum_{j \in \mathcal{I}_i} \left(  f(x_j) - m_{\agg,-j,\theta_{i-1} - \delta_i h_i}(x_j) \right)^2
\right).
\]

Let $\theta_i = \theta_{i-1} - a_i \Delta_i h_i$.
}

Let $\hat{\theta} = \theta_{n_{iter}}$.
\caption{Stochastic gradient descent}
\label{algo:stochastic:gradient}
\end{algorithm}

\vspace{0.5cm}
\def\Cplusplus{C\raisebox{0.5ex}{\tiny\textbf{++}}}
An implementation in $\textsf{R}$ and \Cplusplus{} of both algorithms~\ref{algoKrigIteratif2} and~\ref{algo:stochastic:gradient} is publicly
available on the website~
\url{http://www.clementchevalier.com/index.php/r-packages}.
In practice, the computation cost of $q$ leave-one-out predictions is the sum of a fixed cost -- involving in particular the computation of the $n^2$
covariances $k_{\theta}(x_i,x_j)$ -- and a marginal cost which is
proportional to $q$. When $n=10,000$, these two summands take comparable values for $q=100$, which is the setting we use in practice. Following the recommendations in \cite{bhatnagar13stochastic}, we set $\delta_i = c/(i+1)^{\gamma}$, with $\gamma = 0.101$. We set $a_i = a / (A+i+1)^{\alpha}$, with $\alpha = 0.602$ (as suggested in \cite{bhatnagar13stochastic}), or $\alpha=0.2$, or a combination of these two values. Typically we run a first gradient descent with $\alpha=0.2$, which termination point serves as starting point for a second gradient descent with $\alpha=0.602$.
Good values of $a$, $c$ and $A$ depend of the application case. In practice, satisfactory results are obtained for $n=10,000$, $d=10$ and $p=100$, with $n_{iter} = 500$, in which case the computation time would be around a few hours on a personal computer \did{with a  mono-threaded implementation}.


\section{Numerical applications}
\label{sec:5discussion}

\subsection{Comparison with other aggregation methods}
\label{sec:51CompSimulatedData}
We now compare the predictions obtained with various methods when aggregating 15 Kriging sub-models based on two observations each. The test functions are samples of a centered Gaussian process over $[0,1]$. The compared models are the nested Kriging model introduced in this article, the full model and other methods developed in the literature:
\vspace{0.1cm}
\\
\textbf{Product of expert (PoE)} \cite{hinton2002training} is based on the assumption that for a given $x$, the predictions of each sub-model correspond to independent random variables. As a consequence, the aggregated predicted density for $Y(x)$ is equal to the product of the sub-models densities : $f_{poe}(y) \propto \prod_{i=1}^p f_i(y)$ where $f_i$ is the predicted density of $Y(x)$ according to the $i$th sub-model. The PoE corresponds to the normal model developed in~\cite{winkler1981combining}, in the case of independent experts, when the considered covariance matrix is diagonal (see \eg section 3.2 in the previously cited article and \cite{van2015optimally}). Some extensions of this method to consensus Monte-Carlo sampling can be found in~\cite{scott2016bayes}.
\vspace{0.1cm}\\
\textbf{Generalized product of expert (GPoE)}. As discussed in~\cite{deisenroth2015}, a major drawback of Kriging based PoE is that the prediction variance of the aggregated model decreases when the number of sub-models increases even in regions with no observation points. \cite{caoGPoE} introduced a variant called generalized product of expert where a weighting term is added to overcome this issue. The prediction is then given by
\begin{equation} \label{eq:def:GPoE}
	f_{gpoe}(y) \propto \prod_{i=1}^p (f_i(y))^{\beta_i}.
\end{equation}
For this benchmark, the parameters $\beta_i$ will be set to $1/p$ as recommended in \cite{deisenroth2015}. Notice that GPoE corresponds exactly to what consensus literature refers to \textit{logarithmic opinion pool}, see \eg Eq.(3.11) in \cite{genest1986combining}.
\vspace{0.1cm}\\
\textbf{Bayesian Committee Machine (BCM)} has been introduced in~\cite{trespBCM} to aggregate Kriging sub-models. It is based on the assumption of conditional independence of the sub-models given the process values at prediction points. The predicted aggregated density is given by
\begin{equation}
  f_{bcm}(y) \propto \frac{\prod_{i=1}^p f_i(y)}{f_{Y}(y)^{p-1}}.
\end{equation}
\vspace{0.1cm}\\
\textbf{Robust Bayesian committee machine (RBCM)} has been introduced in~\cite{deisenroth2015} to correct some supposed flaws from BCM aggregations in the case where there are only few observations in each \did{sub-model}. The predicted aggregated density is given by
\begin{equation}
	f_{rbcm}(y) \propto \frac{\prod_{i=1}^p (f_i(y))^{\beta_i}}{(f_{Y}(y))^{-1 + \sum_i \beta_i}},
\end{equation}
where $\beta_i = \frac{1}{2} [\log(\Var{\Y(x)} ) - \log(v_i(x))] $ with $v_i(x)$ the predicted variance of the $i^{th}$ sub-model at $x$.
\paragraph{}
One advantage of these aggregation methods is their very low complexity.
However, GPoE, BCM and RBCM can be proven to be inconsistent~\cite{bachoc2017}. For the sake of comparison, we add two other methods to the benchmark:
\vspace{0.1cm}\\
\textbf{Smallest prediction variance (SPV)}. For a given prediction point $x$, the aggregation returns the prediction of the sub-model with the lowest prediction variance:
\begin{equation}
	f_{spv}(y) = f_k(y) \text{\qquad with } k = \mathop{\operatorname{argmin}}_{i \in \set{1,\dots,p}} v_i(x).
\end{equation}
\vspace{0.1cm}\\
\textbf{Nearest Neighbors (NN)}. This is not strictly speaking an aggregation method: For a given prediction point $x$, it consists in a kriging predictor based only on the $k$ observations points that are the closest to $x$.
\paragraph{}
These last two methods do not suffer from the inconsistency discussed for the above methods, but they provide discontinuous predictions. This can be an issue for example if the (aggregated) model is used to perform some optimization tasks.

\paragraph{}
The test functions are given by samples over $[0,1]$ of a centered Gaussian process $Y$ with a Mat\'ern kernel of smoothness 5/2. The variance and length-scale parameters of the latter are fixed to $\sigma^2 = 1$ and $\theta = 0.05$. The vector of observation points $X$ consists of 30 random points uniformly distributed on $[0,1]$ and we consider in this example the aggregation of 15 sub-models based on two points each. Assuming that the observations points are ordered ($x_1 \leq ... \leq x_n $), each sub-model is trained with two consecutive observations points : $\agg_1 = \{1,2 \}$,\ \dots,\ $\agg_{15} = \{29,30 \}$. The variance and length-scale parameters of the sub-models are equal to the values used to generate the process samples.

\paragraph{}
First of all, we will focus on the aggregated models obtained with the different methods for a given sample path and design of experiments $X$ before looking at the distribution of various criteria when replicating the experiment. Figure~\ref{fig:comparison} shows the aggregated models for the aggregation methods described above. On this example, PoE and GPoE appear respectively to be over- and under-confident in their predictions and show a mean prediction that tends too quickly to zero as the prediction point moves away from the observation points. On the other hand, the predictions from other methods seem more reliable and the best approximation is obtained with the proposed nested estimation approach.
\begin{figure}[htp]
  \centering
  \subfloat[PoE]{\label{fig:compPOE}\includegraphics[width=.36\textwidth]{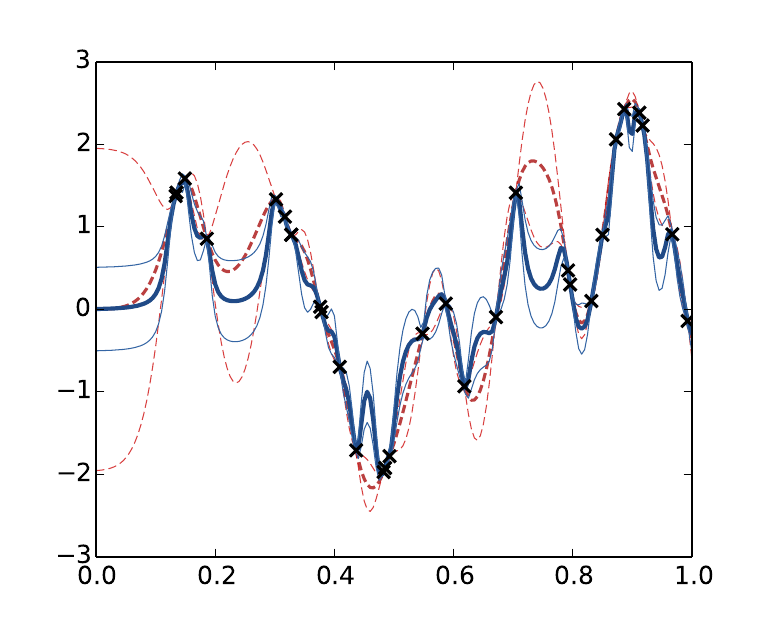}} \qquad
  \subfloat[GPoE]{\label{fig:compGPoE}\includegraphics[width=.36\textwidth]{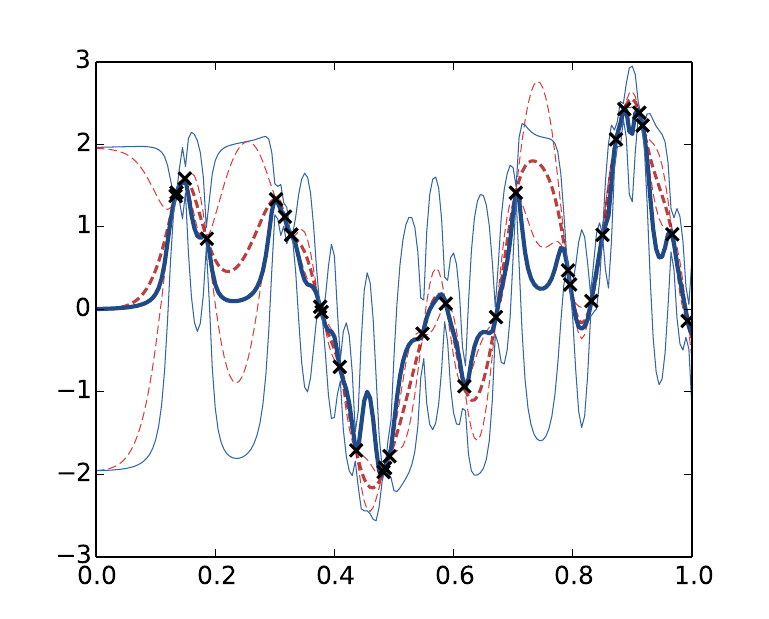}}\\
  \subfloat[BCM]{\label{fig:compBCM}\includegraphics[width=.36\textwidth]{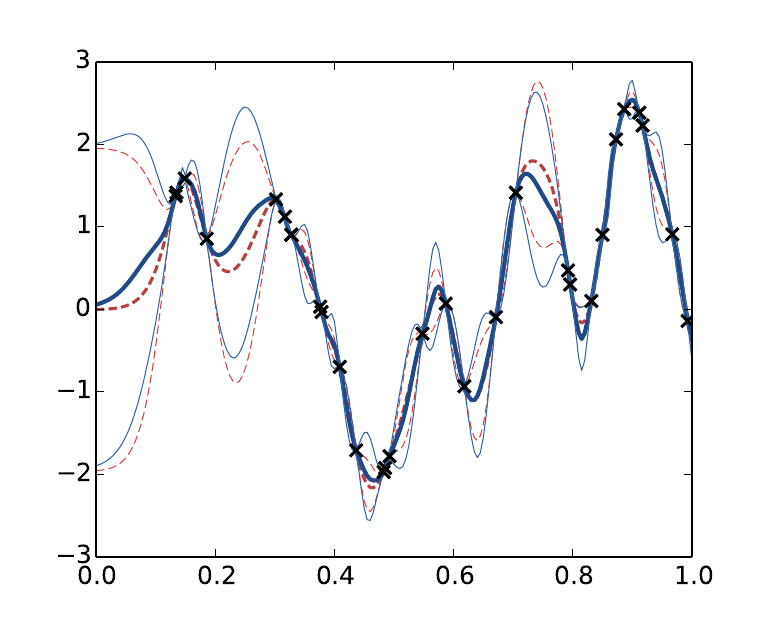}} \qquad
  \subfloat[RBCM]{\label{fig:compRBCM}\includegraphics[width=.36\textwidth]{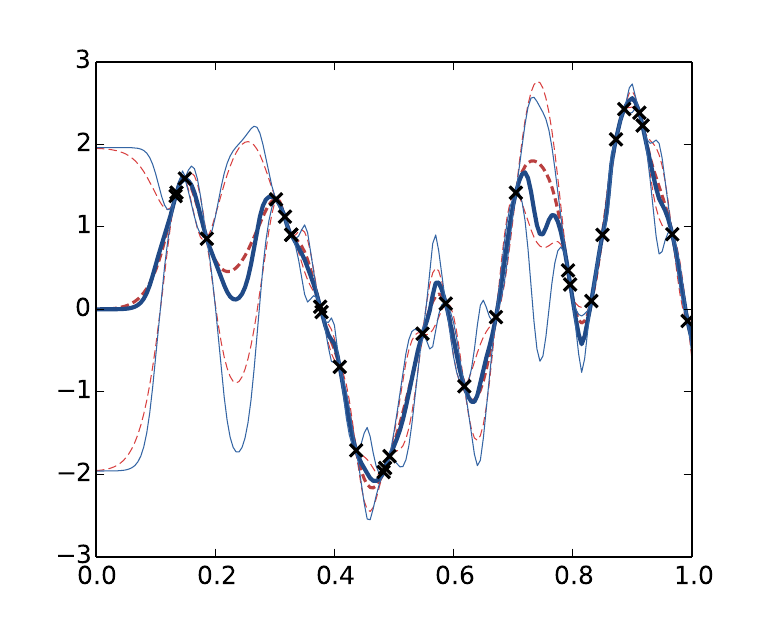}}\\
	\subfloat[SPV]{\label{fig:compSPV}\includegraphics[width=.36\textwidth]{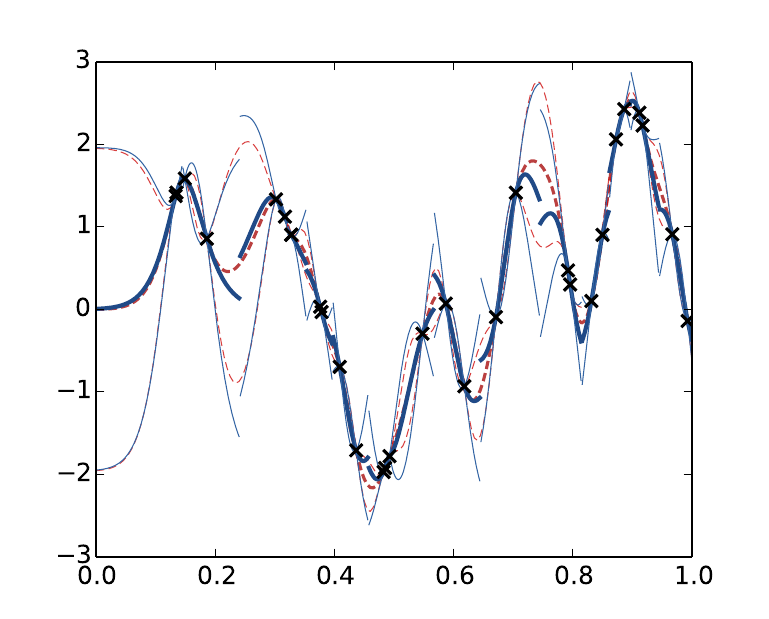}} \qquad
  \subfloat[NN]{\label{fig:compNN}\includegraphics[width=.36\textwidth]{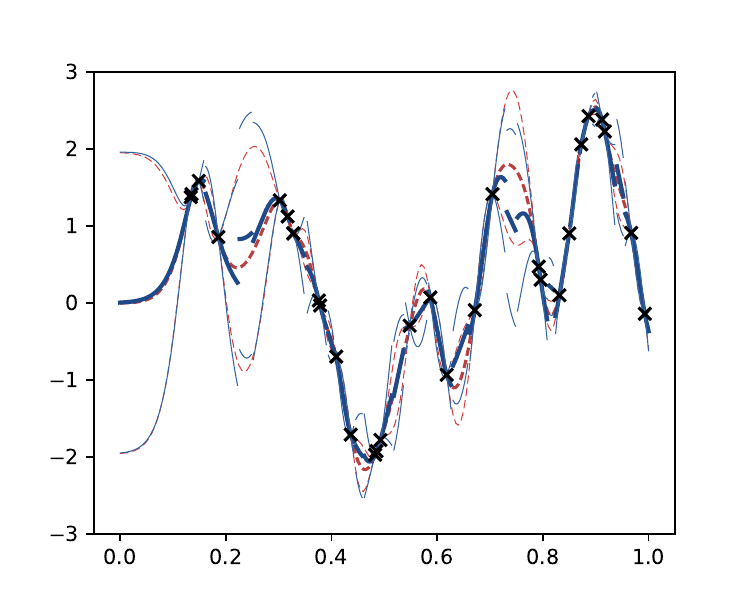}}\\
	\subfloat[nested GPR]{\label{fig:compNest}\includegraphics[width=.36\textwidth]{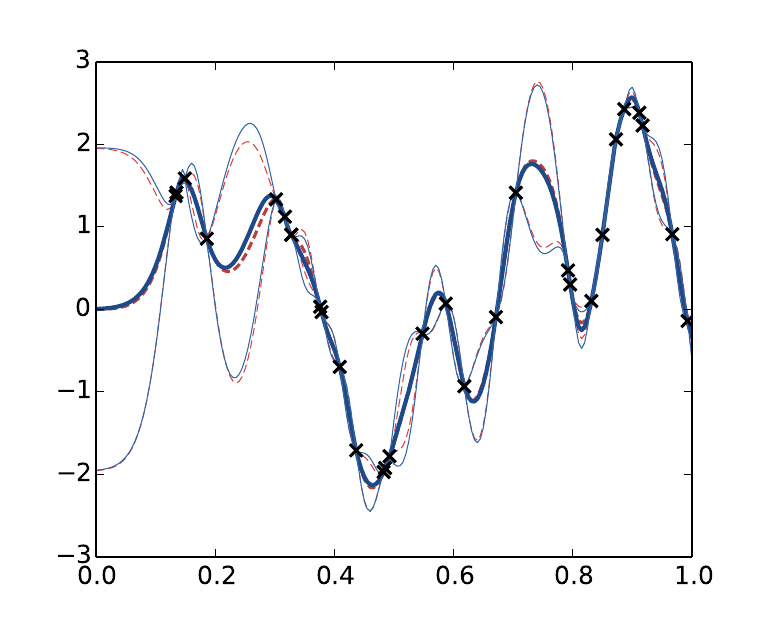}} \qquad
  \subfloat[full]{\label{fig:compfull}\includegraphics[width=.36\textwidth]{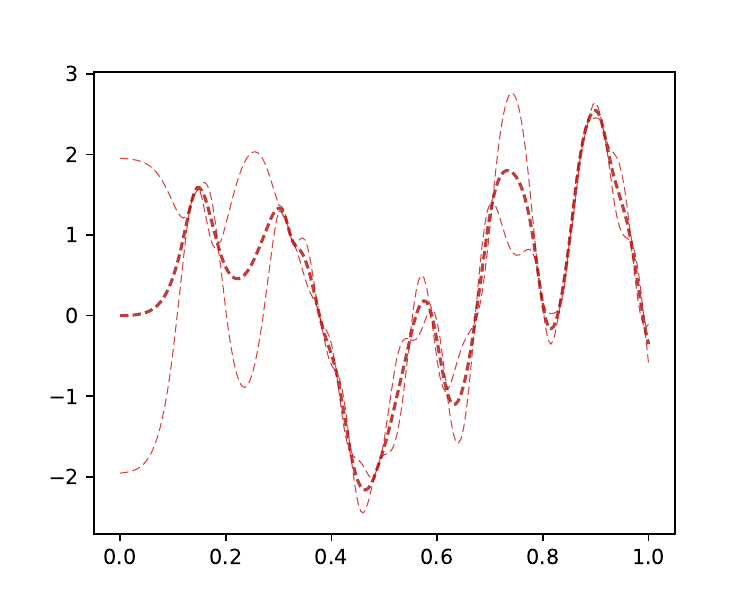}}
  \caption{Comparison of various aggregation methods. The solid lines correspond to aggregated models (mean and 95\% prediction intervals) and the dashed lines indicate the full model predictions (mean and 95\% prediction intervals).}
  \label{fig:comparison}
\end{figure}

\paragraph{}
This can be confirmed by replicating 50 times the experiment by sampling independently the observation points and the test function. We consider three criteria to quantify the distance between the aggregated model and the full model: the mean square error (MSE) to assess the accuracy of the aggregated mean, the mean variance error MVE for the accuracy of the predicted variance and the mean negative log probability (MNLP) \cite{Rasmussen2006} to quantify the overall distribution fit. Let $m, v$ (resp. $m_\full, v_\full$) denote the mean and variance of the model to be tested (resp. the full model) and let ${X}_t$ be the vector of test points. These criteria are defined as:
\begin{equation}
	\accoladesplit{
		MSE(m,m_\full,{X}_t) & = \frac{1}{n_t} \sum_{i=1}^{n_t} ( m(x_{t,i}) - m_\full(x_{t,i}) )^2 \virguleacc \\
		MVE(v,v_\full,{X}_t) &= \frac{1}{n_t} \sum_{i=1}^{n_t}  ( v(x_{t,i}) - v_\full(x_{t,i}) ) \virguleacc \\
		MNLP(m,v,f,{X}_t) & = \frac{1}{n_t} \sum_{i=1}^{n_t} \left( \frac{1}{2} \log(  2 \pi v(x_{t,i}) ) + \frac{ ( m(x_{t,i}) - f(x_{t,i}) )^2} { 2 v(x_{t,i}) } \right) \pointacc \\
	}
	\label{eq:crit}
\end{equation}
Figure~\ref{fig:comparisonBoxplots} shows the boxplots of these criteria for 50 replications of the experiments. It appears that the proposed approach gives the best approximation of the full model for the three considered criteria.
\begin{figure}[htp]
  \centering
	\subfloat[MNLP]{\label{fig:compbxplt3}\includegraphics[width=.32\textwidth]{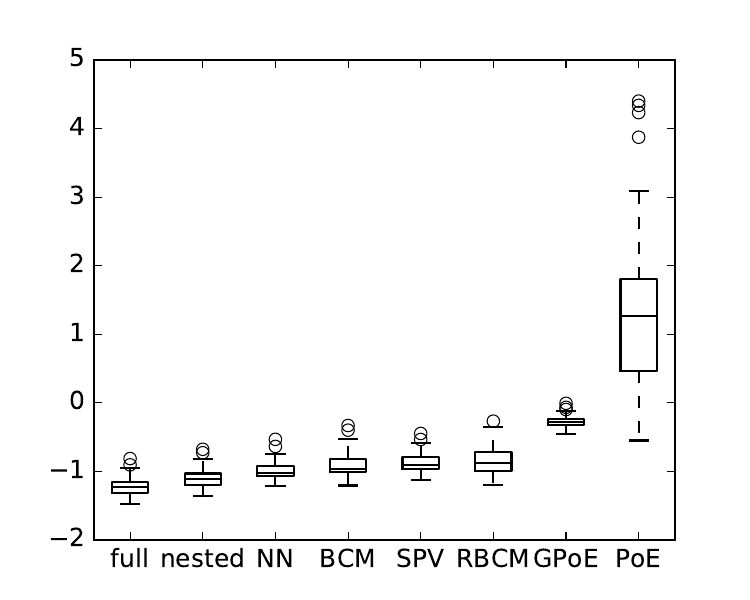}}
  \subfloat[MSE]{\label{fig:compbxplt1}\includegraphics[width=.32\textwidth]{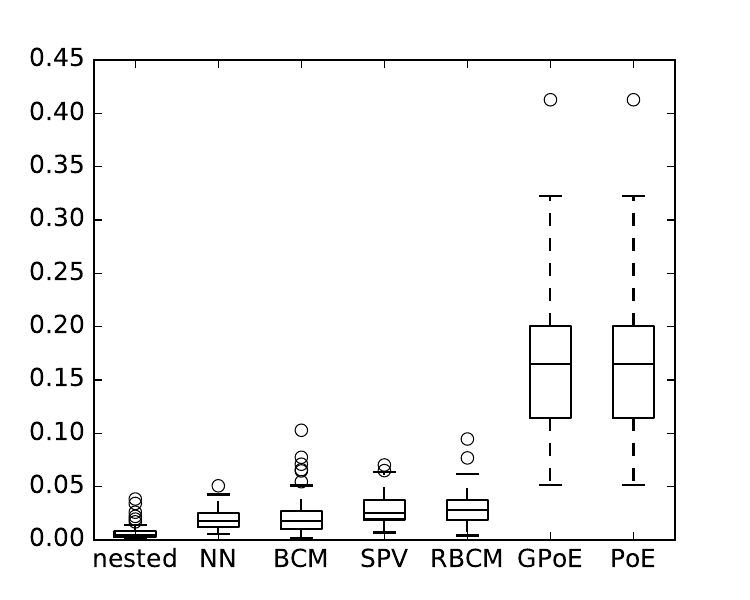}}
  \subfloat[MVE]{\label{fig:compbxplt2}\includegraphics[width=.32\textwidth]{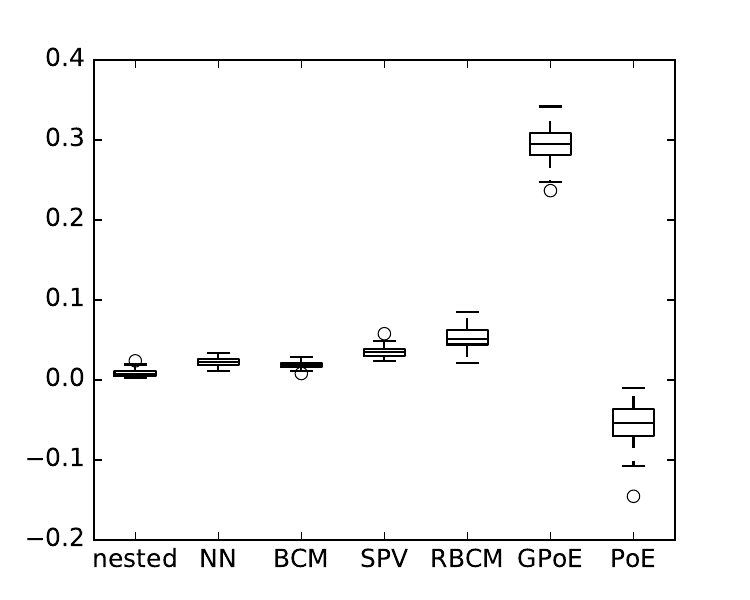}}
  \caption{Quality assessment of the aggregated models for 50 test functions. Each test function is a sample from a Gaussian process and in each case 30 observation points are sampled uniformly on $[0,1]$. The test points vector ${X}_t$ consists of 101 points regularly spaced from $x_{t,1}=0$ to $x_{t,101}=1$.}
  \label{fig:comparisonBoxplots}
\end{figure}

\subsection{Application to a high dimensional input space}
\label{sec:applicationHighDim}
\paragraph{}
We replicate in this section the same experiment as in the previous one but for test functions defined over the unit cube in 100 dimensions. We set the number of training points to $\did{10,000}$ and we generate $100$ sub-models based on a k-means clustering of the input points. This \did{implies} that the average number of points per cluster is 100 so we use the 100 closest observation points in the nearest neighbors method. As previously the test functions are random samples of a centered Gaussian process with squared exponential covariance and we consider two length-scale values to study this parameter influence on the methods to compare: a ``short'' length-scale $\theta=2$ for which the full model captures about 50\% of the prior variance and a ``large'' length-scale $\theta=5$ for which the full model can explain 99\% of the prior variance.

\paragraph{}
The results of the experiment are displayed in Figures~\ref{fig:Applidim100theta2} and ~\ref{fig:Applidim100theta5}. The first striking observation is that BCM, RBCM and PoE underestimate the variance (since MVE\did{s} are negative) \did{and} lead to highly overconfident models. Regarding the other approaches, NN, SPV and GPoE seem to provide similar global accuracy although it can be noted that the Nearest Neighbor method mean predictions are inaccurate for large length-scales. This can be explained by the set of influential neighbors being larger than 100 for such values of the length-scales. Finally, it can be seen that the proposed nested method is the one providing the best approximation of the full model. \did{Compared} to the other methods its mean is more accurate and it provides prediction intervals that are smaller than NN, SPV and GPoE while being realistic as shown by the MNLP. This is especially true for large length-scales for which the proposed approach is even more competitive.

\begin{figure}[htp]
  \centering
	\subfloat[MNLP (log scale)]{\label{fig:d10t2MNLP}\includegraphics[width=.32\textwidth]{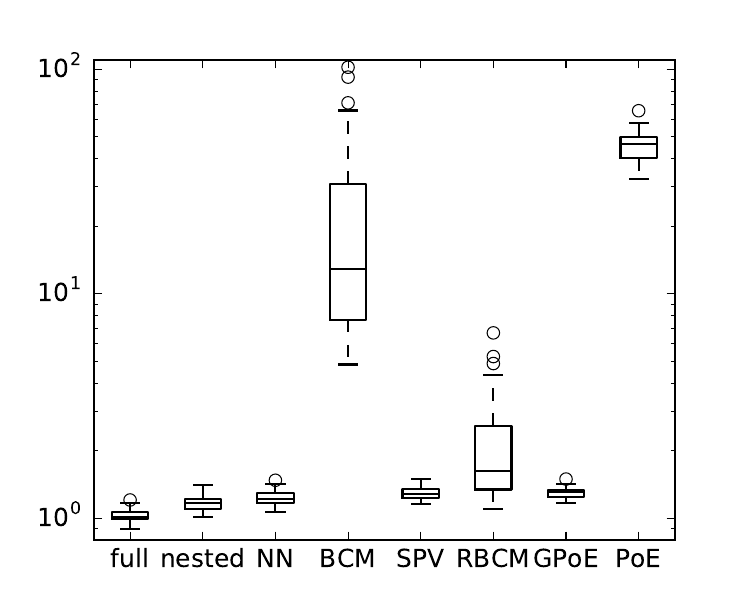}}
  \subfloat[MSE (log scale) ]{\label{fig:d100t2MSE}\includegraphics[width=.32\textwidth]{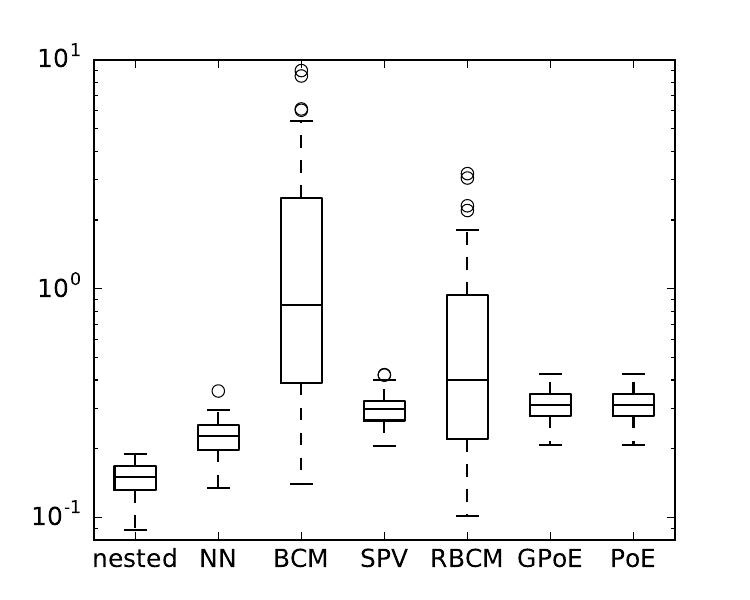}}
  \subfloat[MVE]{\label{fig:d10t2MVE}\includegraphics[width=.32\textwidth]{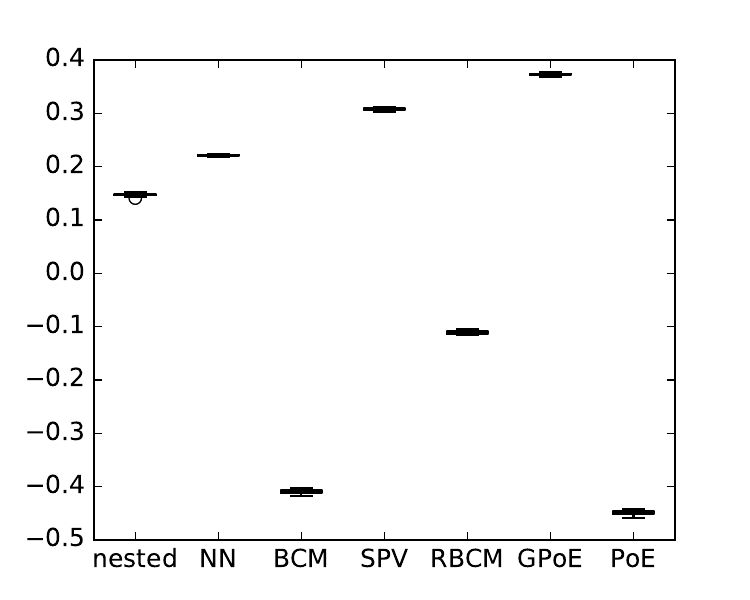}}
  \caption{Quality assessment of various aggregation methods. The test functions are given by 50 samples from a centered Gaussian process over $[0,1]^{100}$ with squared exponential kernel, unit variance and length-scale $\theta=2$. The models are built using $\did{10,000}$ observations points drawn uniformly in the input space. These input points are gathered into 100 groups using k-means in order to build the sub-models. The test point locations are obtained by sampling uniformly 100 points in the input space.}
  \label{fig:Applidim100theta2}
\end{figure}

\begin{figure}[htp]
  \centering
	\subfloat[MNLP]{\label{fig:d10t5MNLP}\includegraphics[width=.32\textwidth]{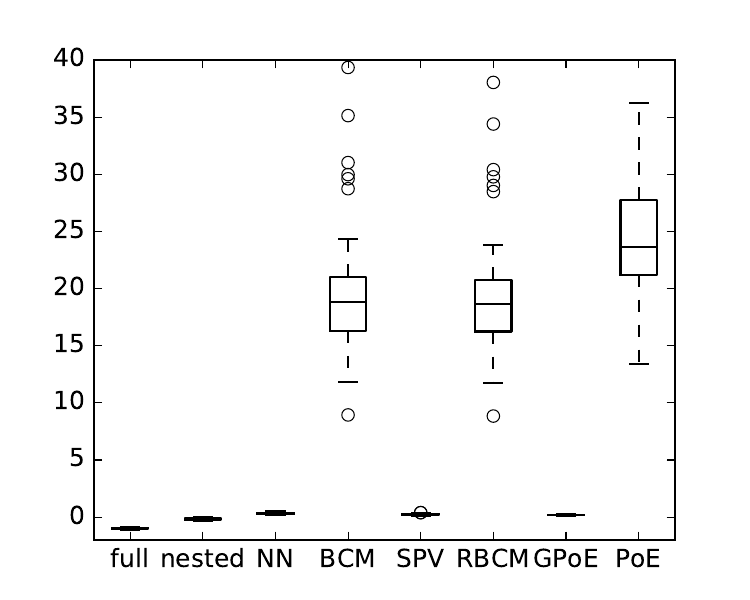}}
  \subfloat[MSE]{\label{fig:d100t5MSE}\includegraphics[width=.32\textwidth]{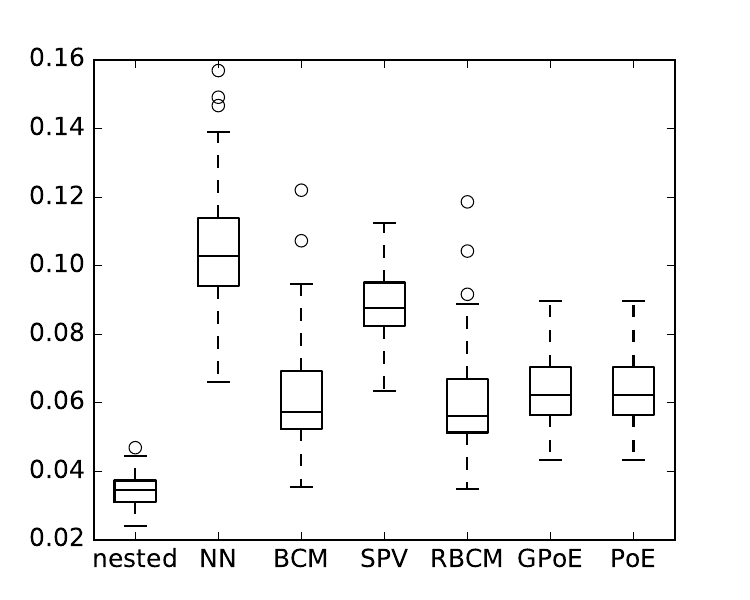}}
  \subfloat[MVE]{\label{fig:d10t5MVE}\includegraphics[width=.32\textwidth]{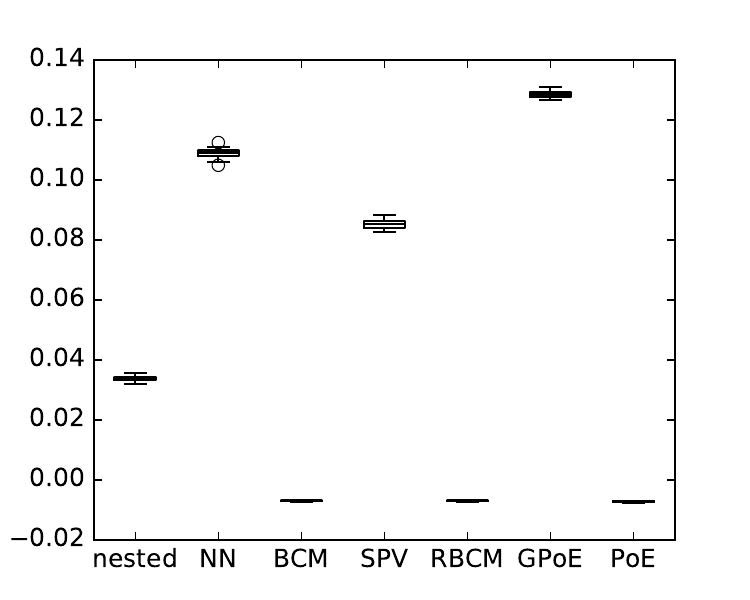}}
  \caption{Same settings as in Figure~\ref{fig:Applidim100theta2}, but the length-scale of the test functions is 5.}
  \label{fig:Applidim100theta5}
\end{figure}

\subsection{Application to a large dataset}
\label{sec:7applicationHartman}
In this section, we analyze the performance of the proposed method on a test function with one million observations.\\

We use two different test functions. The first one is the \textrm{Hartman6} test function \did{in} dimension $d=6$ (available for example in the \textrm{DiceKriging} \textrm{R} package \cite{roustant12dice}). The second one in the dimension $d=18$ is called here \textrm{Hartman18} test function: it is simply the sum of three \textrm{Hartman6} test functions, each acting on 6  separated parameters: $\mathrm{Hartman_{18}}(x_{1:18})=\mathrm{Hartman_{6}}(x_{1:6})+\mathrm{Hartman_{6}}(x_{7:12})+\mathrm{Hartman_{6}}(x_{13:18})$.\\

For the Hartman6 test function, the covariance parameters of a squared exponential kernel have been estimated once on a subset of points. We give here the obtained length-scales, so that the results can be easily reproduced: $(0.262, 0.435, 0.423, 0.348, 0.314, 0.299)$.
For the Hartman18 test function, we use two different sets of covariance parameters: the first one is slightly misspecified since it is given by the one estimated for Hartman6 repeated three times, the second one is estimated with usual MLE on a subset of 2000 points. Although the model could be improved with a refined estimation of the length-scales, this is sufficient to compare the different methods. The variance parameter has no influence on this comparison since we only consider here the performance of the mean predictors.\\

We consider in this example $n=1,000,000$ design points and $q=100$ predictions points. Several methods are considered:
\begin{itemize}
\item the \textit{Kriging} predictor refers to a simple Kriging predictor based on a random sample (without replacement) of $1000$ points taken among the initial points. It is mainly computed in order to give an order of the reachable error magnitude for a reasonable learning of the test function, and to see if refined methods really improve the performance of the prediction.
\item The \textit{Neighbor} predictor refers to a simple kriging predictor which gives, for each prediction point, the prediction based on its nearest neighbors in the design matrix $X$. \textit{$Near_{100}$} refers to a predictor based on $100$ nearest neighbors, \textit{$Near_{1000}$} refers to a predictor based on $1000$ nearest neighbors.
\item The \textit{Nested} method refers to the proposed method in this paper. For one million points, we have chosen a tree structure corresponding to $N=1000$ groups of points, each group being obtained using \textit{kmeans} clustering algorithm. Two variants are considered: \textit{Nested} refers to a clustering that is built directly without considering the locations of the prediction points. \textit{Nested+} refers to a clustering that is built using these locations (i.e. first $q=100$ clusters are built around each prediction points, without overlapping, and $N-q=900$ clusters are built on residual design points). In all cases, depending on the location of design points, each cluster size typically vary between $800$ and $1200$.
\end{itemize}

For each run, we draw uniformly a new design matrix $X$, a new vector of predictions points $x$, and we analyze the performance of the predictors. To this aim, the predictions are compared to the true chosen test function, and we collect for each run one mean of  errors over all prediction points. We reproduce the whole experiment on 10 runs. The results are gathered in the boxplots of Figure~\ref{fig:boxplotHartman6} and Figure~\ref{fig:boxplotHartman18}.\\

\begin{figure}
\centering
\subfloat[Hartman 6]{\includegraphics[width=0.48\linewidth]{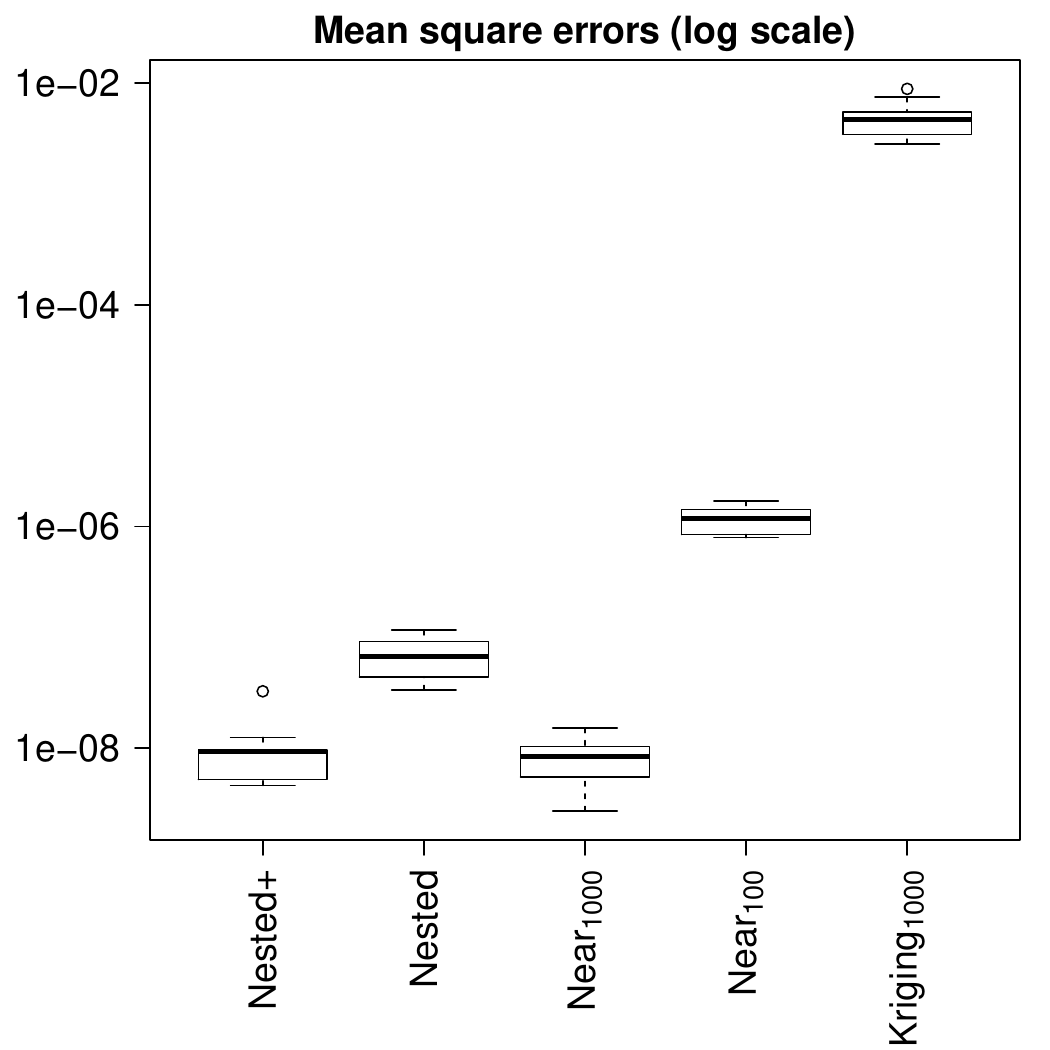}}
  \caption{\did{Boxplot} of the mean square errors (in log scale), for the Nested procedure and its variant, the nearest neighbors procedure based on 1000 neighbors or 100 neighbors, and for the Kriging method based on a random sample of 1000 points. \did{We used } one million input points, one hundred prediction points  \did{and} Hartman6 test function.}
  \label{fig:boxplotHartman6}
\end{figure}

\begin{figure}
\centering
\subfloat[Hartman 18, concatenated parameters]{\includegraphics[width=0.48\linewidth]{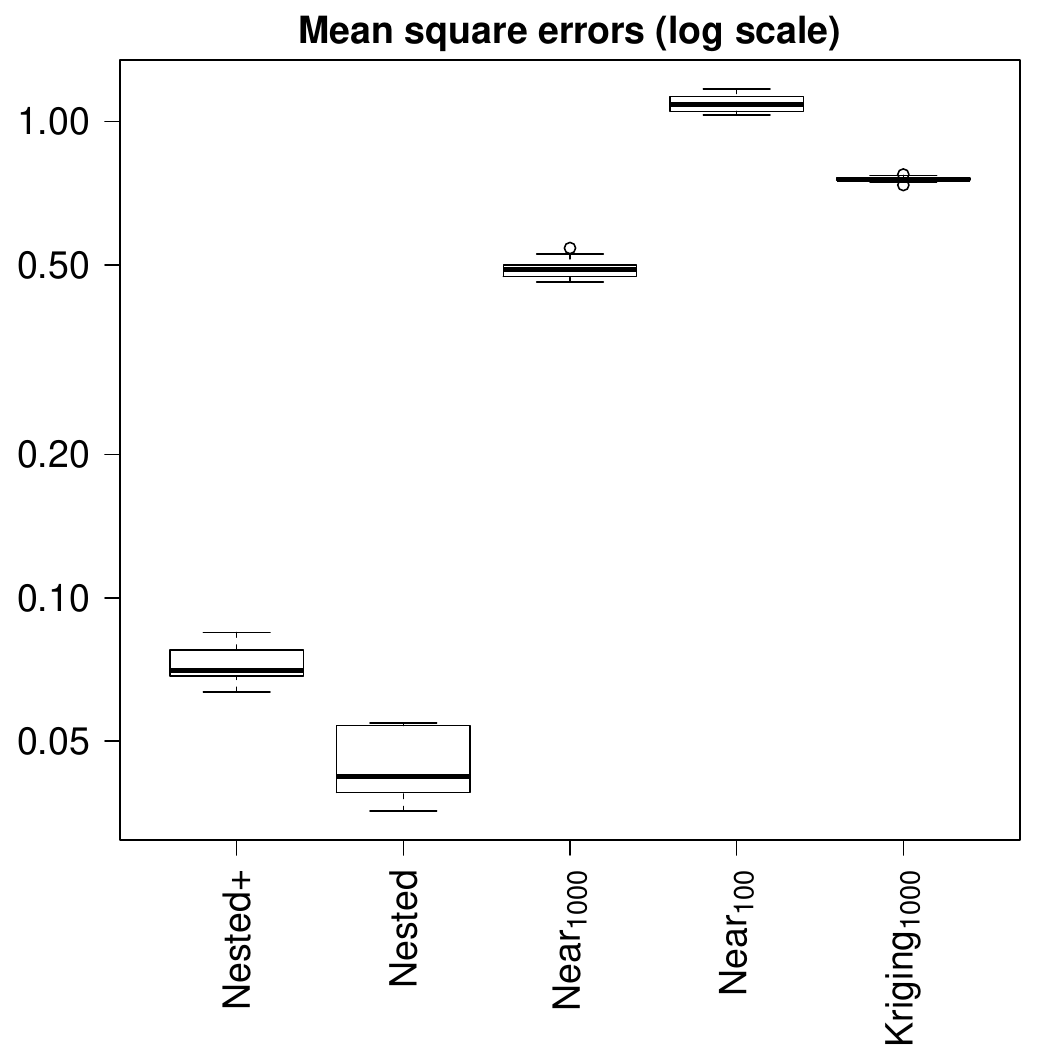}}
  \subfloat[Hartman 18, estimated parameters]{\includegraphics[width=0.48\linewidth]{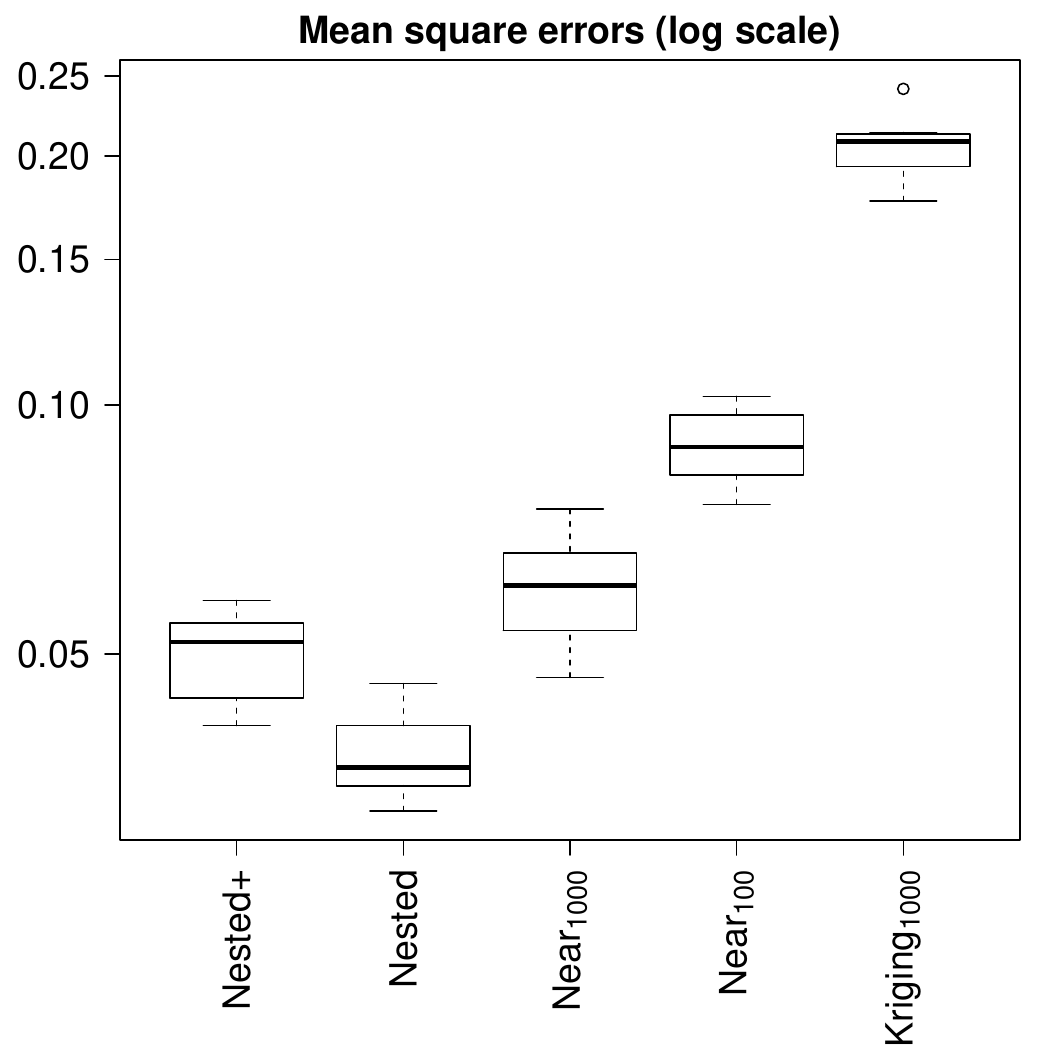}}
  \caption{\did{Boxplots} of mean square errors (in log scale), for the Nested procedure and its variant, the nearest neighbors procedure based on 1000 neighbors or 100 neighbors, and for the Kriging method based on a random sample of 1000 points. \did{We used } one million input points, one hundred prediction points, Hartman18 test function \did{and either} concatenated parameters from Hartman6 (left panel) or estimated parameters (right panel).}
  \label{fig:boxplotHartman18}
\end{figure}


As announced, the Kriging predictor based on a random sample of 1000 points is mainly given to get an order of the errors magnitude with a reasonable learning of the function, but clearly its complexity is lower, and it does not reach the precision level of other competitors: other methods outperform results based on a random sample of experiments.\\
The nearest neighbor is a clear improvement over the randomly selected training points and the computation scheme is relatively simple. However it generates non-continuous mean and variance predictions, so the practical interest of such \did{a} model (e.g. to perform optimization) may be limited. While it performs very well in small dimension, one can see on Figure~\ref{fig:boxplotHartman18} that it becomes less attractive in higher dimension since it would require to further increase the number of neighbors to provide competitive predictions.
One can also see that nearest neighbors are also quite sensitive to the estimation of the parameters.\\
The nested procedure has a greater complexity\did{:} each run takes around \did{$50$ minutes} on a modern computer with 16 threads \did{($1,000,000$ observations, $100$ prediction points, Hartman18 test function, $1000$ groups and possible multithreading scalability improvements). The execution time for $100,000$ observations and unchanged other settings is around $30$ seconds}. \did{Our procedure} remains here accurate with concatenated parameters, \did{contrarily} to other methods. It leads to a very good accuracy and some theoretical advantages previously presented.
In relatively small dimension, as in Figure~\ref{fig:boxplotHartman6}, depending on the length-scales of the underlying process, the closest neighbors may be sufficient to explain the local shape of the response. In this case, a judicious choice of the tree structure may improve the accuracy of the nested method, which is comparable to the one of the $1000$ closest neighbors. In larger dimension, when local information is not sufficient, the choice of the tree structure has a lower impact, and the refinements of the nested method make sense, as they lead to a better accuracy than considered local neighbors methods.\\

Finally, despite greater complexity, the proposed method is still tractable with one million of observations. It leads to a better accuracy, especially in high dimension. In small dimension and when possible, it can be useful to build the tree structure by using the location of the prediction points, to take the best of both closest neighbors and nested methods. At last, the proposed nested method makes an intensive use of cross-covariances between groups and can surely be improved by using a better estimation of these parameters, or by a transformation of the inputs or the outputs that would make the assumptions more reliable.

\subsection{Application to an industrial case study}
\label{sec:7application}

\paragraph{}
We consider in this section experimental data on the behavior of a steel test piece subject to cycles of tension-compression. During these cycles, the evolution of the tensile strain in the test piece is monitored over time using two methods: by performing the actual physical experiment and by a numerical simulator based on a Chaboche constitutive equation \cite{lemaitre1994mechanics}. The quantity of interest is the misfit between these two experiments. A test piece is described by $6$ scalar variables $(E,C_1,C_2,\gamma_1^0, \gamma_2^0, r)$, where $E$ is a logarithm transform of the Young's modulus, $C_1$, $C_2$, $\gamma_1^0$ and $\gamma_2^0$ are parameters related to the kinematic hardening and $r$ is the radius of the plastic surface at the stabilized state. The set of admissible inputs is denoted by $D \subset \R^6$.

Hereafter, we focus on modeling the function $f: D \rightarrow \R$ that returns the logarithm of the $L^2$ norm of the difference between the curve from the actual experiment and the one from the simulator.

\paragraph{}
In total, we have at our disposal a set of $10,000$ observations $[X,f(X)]$, from which we randomly extract a learning set $[X_{l},f (X_l)]$ of $n=\did{9000}$ observations and assign the $n_t = \did{1000}$ remaining observations to a test set $[X_t,f (X_t)]$.

\paragraph{}
We compare the predictions of $f(X_t)$ obtained from the SPV, PoE, GPoE1, GPoE2, BCM and RBCM aggregation procedures described in Section~\ref{sec:51CompSimulatedData} with our nested aggregation procedure. GPoE1 corresponds to \eqref{eq:def:GPoE} with $\beta_i = \frac{1}{2} [\log(\Var{\Y(x)} ) - \log(v_i(x))] $ \cite{caoGPoE} and GPoE2 corresponds to \eqref{eq:def:GPoE} with $\beta_i = 1/p$ \cite{deisenroth2015}. For all these methods, we consider an aggregation tree of height $\numax = 2$ (once sub-models have been evaluated at layer 1, they are all directly aggregated into one value at layer 2), so that $p$ Gaussian process models are directly aggregated. The $p$ subsamples form a partition of $[X_l,f(X_l)]$, which is obtained using the $k$-means clustering algorithm.

\paragraph{}
Three covariance functions have been considered for the sub-models: (tensorized) exponential, Mat\'ern $3/2$ and Mat\'ern $5/2$ (see \cite{Rasmussen2006,roustant12dice} for the definition of these functions). For all studied methods, the Mat\'ern $5/2$ covariance seemed to be the most appropriate to the problem at hand since we obtained overall more accurate results. The results presented hereafter thus focus on this Mat\'ern $5/2$ covariance family. Its parameters are estimated with two different techniques depending on the aggregation method: for the methods from the literature and SPV, we follow the recommended procedure which consists in maximizing the sum of the log likelihoods over the $p$ subsamples of $[X_l,f(X_l)]$ (see \cite{deisenroth2015}). For the proposed nested aggregation, we carry out the stochastic-gradient based estimation method described in Section~\ref{sec:6estimation}, with starting points set to the maximizer of the sum of the log likelihoods.

\paragraph{}
To assess the quality of a model with predicted mean $m$ and variance $v$, we compute three quality criteria using the test set: MSE and MNLP as per Eq.~\ref{eq:crit} which are small for a good model, and the mean normalized square error (MNSE)
\[
MNSE(m,v,f,{X}_t) = \frac{1}{n_t} \sum_{i=1}^{n_t} \frac{ ( m(x_{t,i}) - f(x_{t,i}) )^2}
{ v(x_{t,i}) },
\]
which should be close to $1$.

\paragraph{}
The prediction results for a given learning and training test set are given in Table~\ref{table:p20:kmeans:matern5:2} for the aggregation of $p=20$ sub-models and in Table~\ref{table:p90:kmeans:matern5:2} for $p=90$. It can be seen that in both cases the proposed method outperforms the other aggregation methods for the MSE and MNLP quality criteria. The MSE has the same order of magnitude for the SPV and our aggregation method,
where the prediction errors are small compared to the empirical variance of the test outputs $f(x_{t,i})$, $i=1,...,n_t$, which is approximately equal to $0.81$.
In \did{contrast},  the MSE can be significantly larger for all the other aggregation procedures. For the PoE, GPoE1, BCM and RBCM aggregation techniques,
the values of MNSE are orders of magnitude greater than the target value one, which indicates that the aggregated models are highly overconfident. The GPoE2 aggregation technique is also overconfident when $p=90$, where its MNSE is equal to 5.16.
 The SPV and our aggregation methods provide appropriate predictive variances, and our method provides the best combination of predictions and predictive variances, according to the MNLP criterion.

\begin{table}
\begin{center}
\begin{tabular}{ c | c   c  c  c  c c   c   }
\hline
     & SPV       & PoE       & GPoE1     & GPoE2     & BCM       & RBCM      & Nested  \\
MSE  & $0.00416$ &  $0.0662$ & $0.0033$ & $0.0662$  & $0.604$ & $0.0625$ & $\mathbf{0.00321}$ \\
MNSE & $1.27$ & $20.00$ & $4.55$ & $\mathbf{1.00}$ & $219$ & $60.8$ & $0.846$ \\
MNLP & $-1.86$ & $7.25$ & $-0.949$ & $-0.765$ & $107$ & $27.2$ & $\mathbf{-1.97}$ \\
\hline
\end{tabular}
\end{center}
\caption{Prediction performances of the aggregation of $p=20$ sub-models for the steel piece constraints cycles data set. The investigated prediction performance criteria are the mean square error (MSE) which should be minimal, mean normalized square error (MNSE) which should be close to $1$ and mean negative log probability (MNLP) which should be small. Bold figures indicate each line's best performing aggregation method.}
\label{table:p20:kmeans:matern5:2}
\end{table}

\begin{table}
\begin{center}
\begin{tabular}{ c | c   c   c  c   c   c  c  }
\hline
     & SPV       & PoE       & GPoE1     & GPoE2           & BCM       & RBCM      & Nested \\
MSE  & $0.00556$  & $0.811$ & $0.0244$ & $0.811$  & $1.84$  & $0.121$ & $\mathbf{0.00418 }$ \\
MNSE &  $1.20$ & $465$ & $34.2$ & $5.16$ & $980$ & $148$    & $\mathbf{0.84700}$ \\
MNLP & $-1.55$ & $230$ & $14.1$ & $2.13$ & $487$  & $71$    & $\mathbf{-1.7}$ \\
\hline
\end{tabular}
\end{center}
\caption{Prediction performances of the aggregation of $p=90$ sub-models for the steel piece constraints cycles data set. All other settings are the same as in Table \ref{table:p20:kmeans:matern5:2}.}
\label{table:p90:kmeans:matern5:2}
\end{table}

\paragraph{}
Tables \ref{table:p20:kmeans:matern5:2} and \ref{table:p90:kmeans:matern5:2} also show that aggregating $p=20$ sub-models gives more accurate models than aggregating $p=90$ sub-models. This suggests that it is a good practice to aggregate few sub-models based on many points instead of aggregating many sub-models based on few points. Although this would require further testing to be confirmed, it is not surprising since aggregation methods rely on some independence assumptions that are not often met in practice.

\paragraph{}
Tables~\ref{table:p20:random:matern5:2} and \ref{table:p90:random:matern5:2} show the values of the quality criteria when the subsamples used for the $p=20$ or $p=90$ sub-models are randomly generated into the learning set. They can thus be compared to Tables~\ref{table:p20:kmeans:matern5:2} and \ref{table:p90:kmeans:matern5:2} to study the influence of the choice of the support points of the sub-models\did{:} the criteria values are overall better in Tables~\ref{table:p20:kmeans:matern5:2} and \ref{table:p90:kmeans:matern5:2} so using $k$-means is beneficial for the aggregation procedures.
In addition, our proposed aggregation technique becomes better in comparison to the other methods, and specifically to SPV, when the subsamples are randomly generated.

\begin{table}
\begin{center}
\begin{tabular}{ c | c   c   c  c   c   c  c  }
\hline
     & SPV      & PoE      & GPoE1     & GPoE2    & BCM       & RBCM     & Nested \\
MSE  & $0.0086$ & $0.00763$ & $0.00704$ & $0.00763$ &  $0.338$  & $0.274$ & $\boldsymbol{ 0.00539}$ \\
MNSE & $1.21$ & $9.38$ & $16.6$ & $0.469$ & $178$ & $268$   & $\boldsymbol{0.864}$ \\
MNLP & $-1.25$ & $1.75$ & $5.03$ & $-1.21$ & $86.2$ & $130$   & $\boldsymbol{-1.5}$ \\
\hline
\end{tabular}
\end{center}
\caption{Same settings as in Table \ref{table:p20:kmeans:matern5:2} but when the subsamples are randomly selected.}
\label{table:p20:random:matern5:2}
\end{table}

\begin{table}
\begin{center}
\begin{tabular}{ c | c   c   c  c   c   c  c  }
\hline
     & SPV      & PoE      & GPoE1     & GPoE2    & BCM       & RBCM     & Nested \\
MSE  & $0.0182$ & $0.0293$ & $0.0246$ & $0.0293$  & $0.977$  & $0.686$ & $\boldsymbol{ 0.00575}$ \\
MNSE & $1.29$ & $42.5$ & $57.2$ & $0.473$ & $852$ & $988$  & $\boldsymbol{0.867}$ \\
MNLP & $-0.804$ & $18.3$ & $25.3$ & $-0.517$ & $423$ & $491$   & $\boldsymbol{-1.37}$ \\
\hline
\end{tabular}
\end{center}
\caption{Same settings as in Table \ref{table:p20:kmeans:matern5:2} but with $p=90$ sub-models and where the subsamples are randomly selected.}
\label{table:p90:random:matern5:2}
\end{table}

\paragraph{}
All previous results have been obtained for a given random choice of the learning and test sets. We now replicate the procedure 20 times, with the same settings as in Tables~\ref{table:p20:kmeans:matern5:2} ($p=20$; subsamples obtained from the $k$-means algorithm; Mat\'ern $5/2$ covariance function)
and \ref{table:p90:random:matern5:2} ($p=90$; subsamples randomly selected; Mat\'ern $5/2$ covariance function), but with different learning and test sets for each replication. The covariance parameters are reestimated for each learning set, by minimizing the sum of log likelihoods for the SPV, PoE, GPoE1, GPoE2, BCM and RBCM aggregation techniques, and with the proposed leave-one-out estimation procedure for our nested aggregation method. The boxplots of the corresponding 20 mean square errors and mean negative log probability are reported in Figures~\ref{fig:box:plots} and \ref{fig:box:plots:p90}. These replications confirm the results obtained previously on single instances of the learning and test set: the proposed nested aggregation and covariance parameter estimation jointly give better prediction both for the predicted mean and variance than current existing aggregation techniques.

\begin{figure}[htp]
\centering
\begin{tabular}{ccc}
\includegraphics[height=5cm]{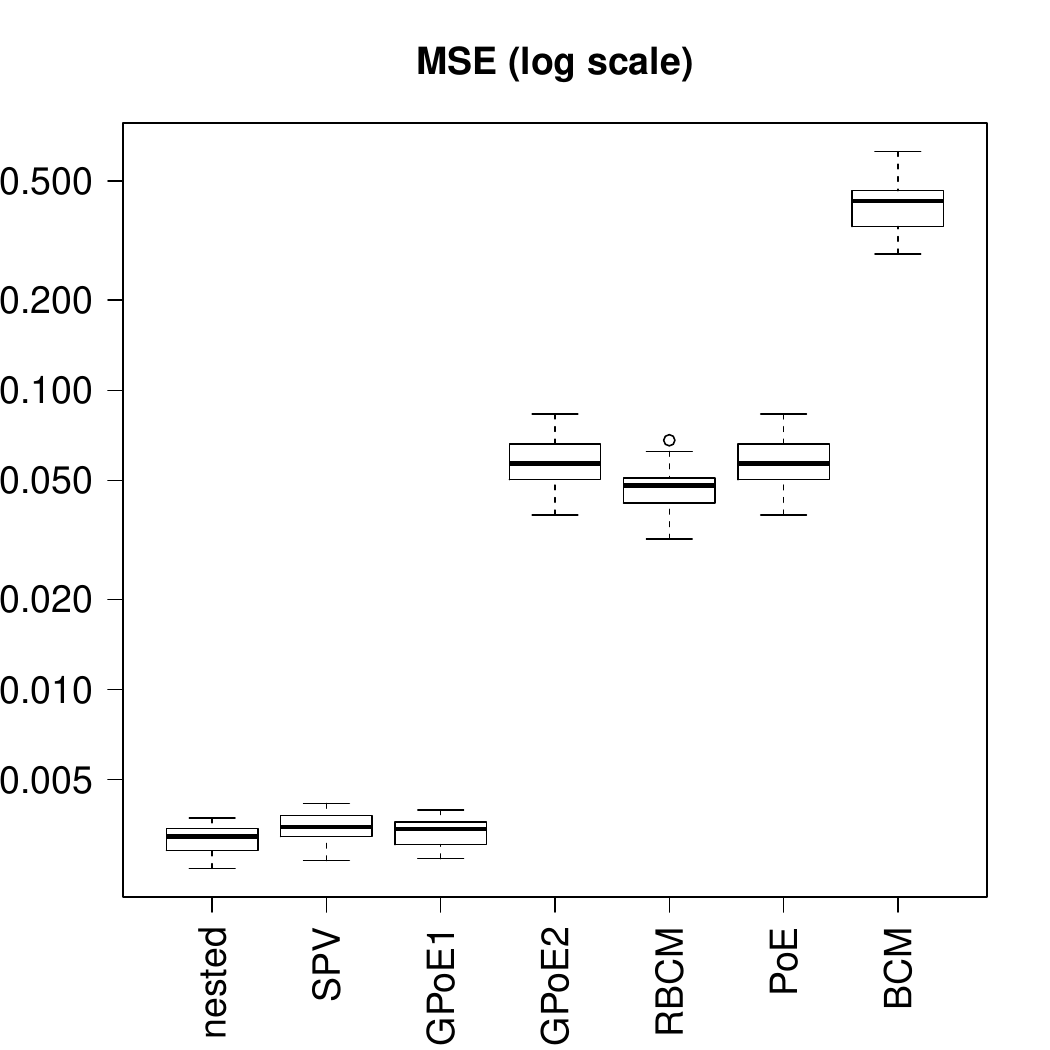} &
\includegraphics[height=5cm]{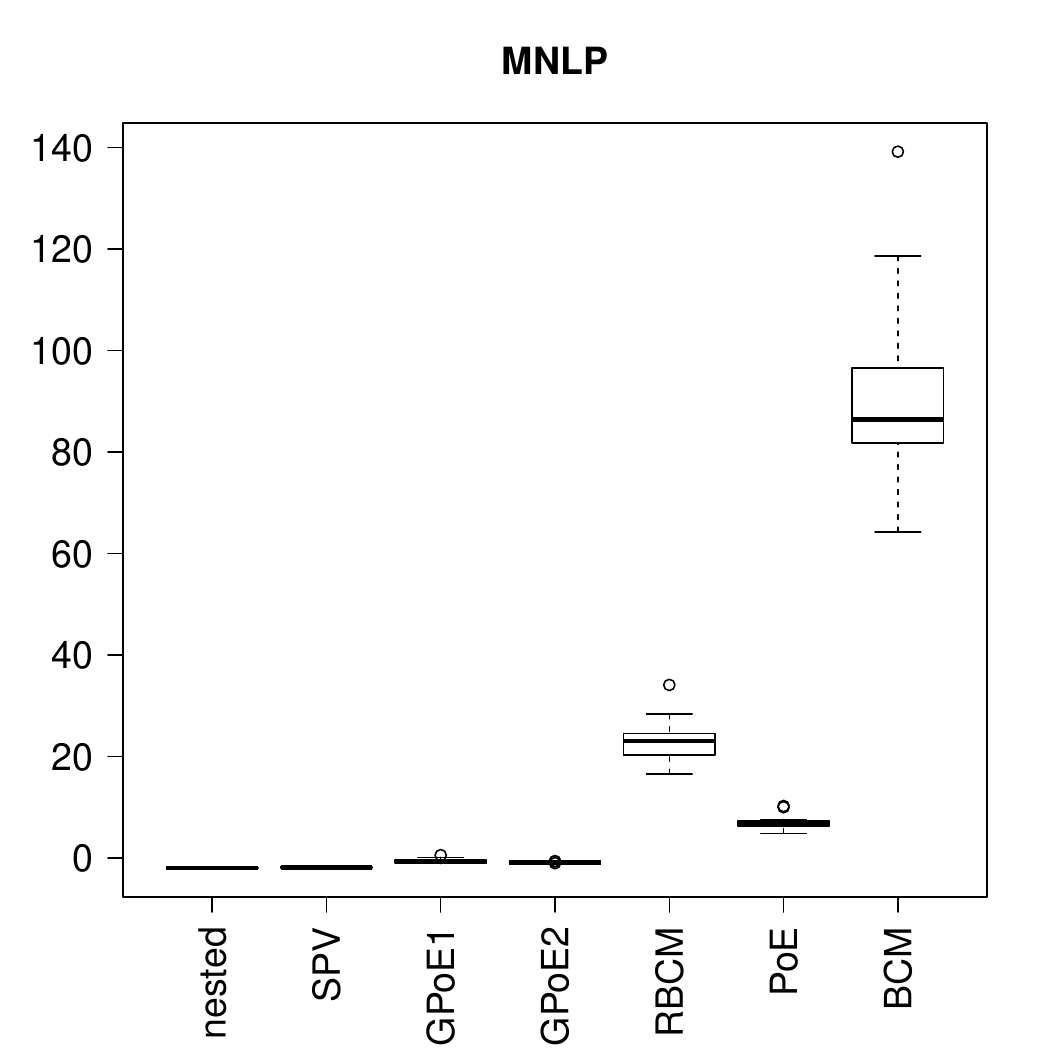} &
\includegraphics[height=5cm]{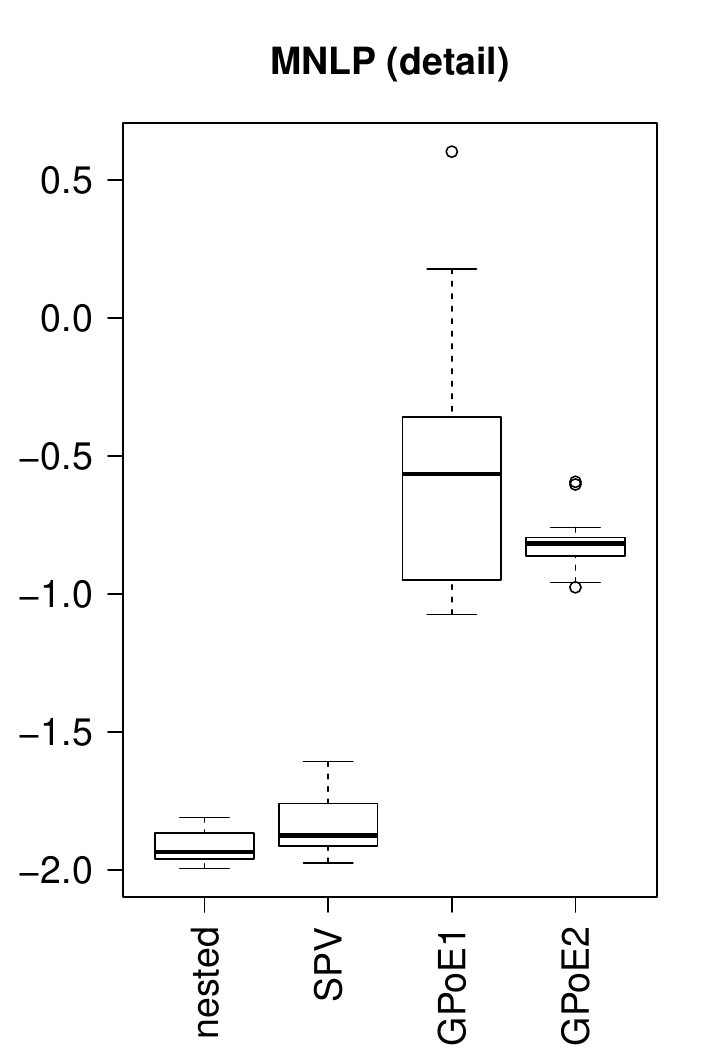}
\end{tabular}
\caption{Boxplots of $20$ values of the mean square error (MSE) prediction criterion and of the logarithm of the mean negative log probability (MNLP) prediction criterion where the learning and test sets are randomly generated. The settings are as in Table \ref{table:p20:kmeans:matern5:2} ($p=20$ subsamples obtained from the $k$-means algorithm; Mat\'ern $5/2$ covariance function). The covariance parameters are estimated by minimizing the sum of log likelihoods for the SPV, PoE, GPoE1, GPoE2, BCM and RBCM aggregation techniques, and with our proposed leave-one-out estimation procedure for the nested aggregation procedure.}
\label{fig:box:plots}
\end{figure}

\begin{figure}[htp]
\centering
\begin{tabular}{ccc}
	\includegraphics[height=5cm]{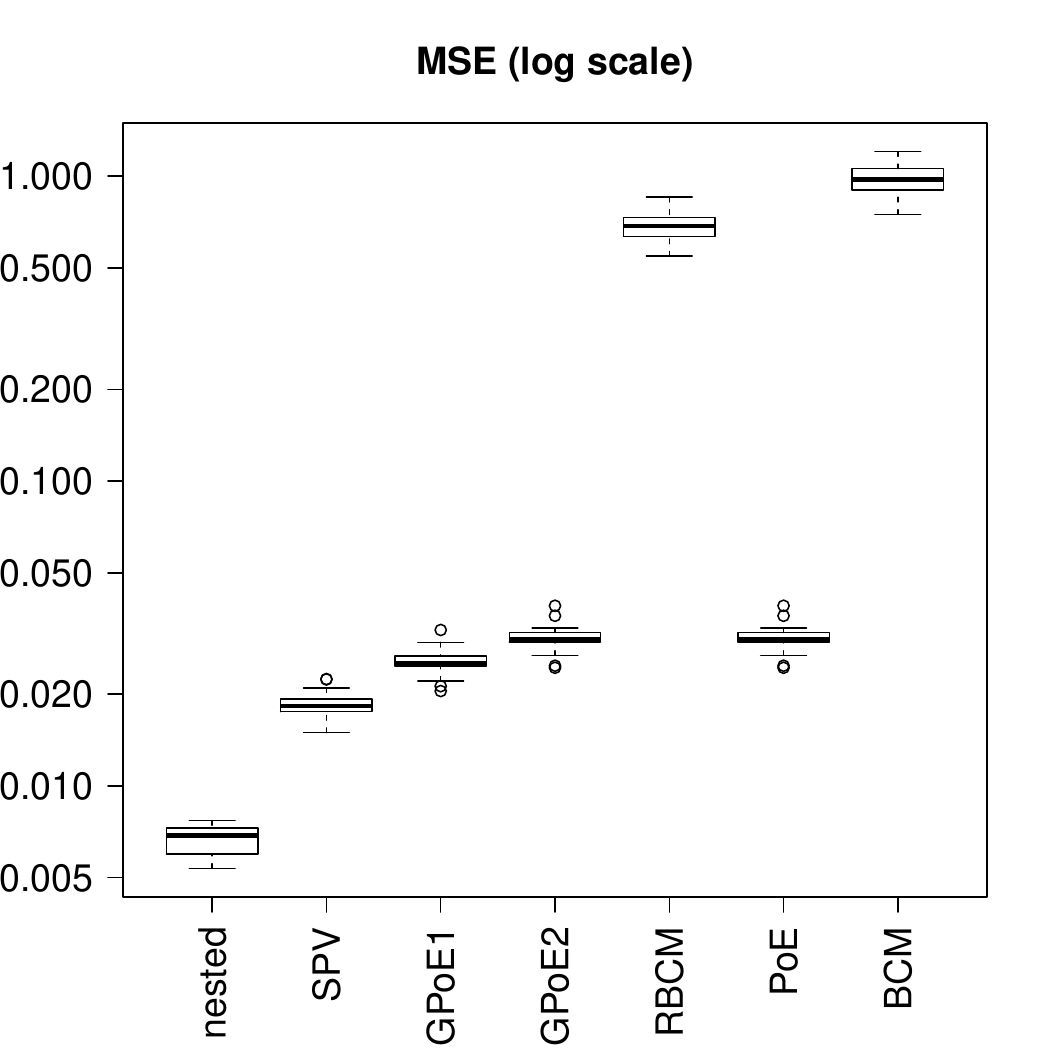} &
	\includegraphics[height=5cm]{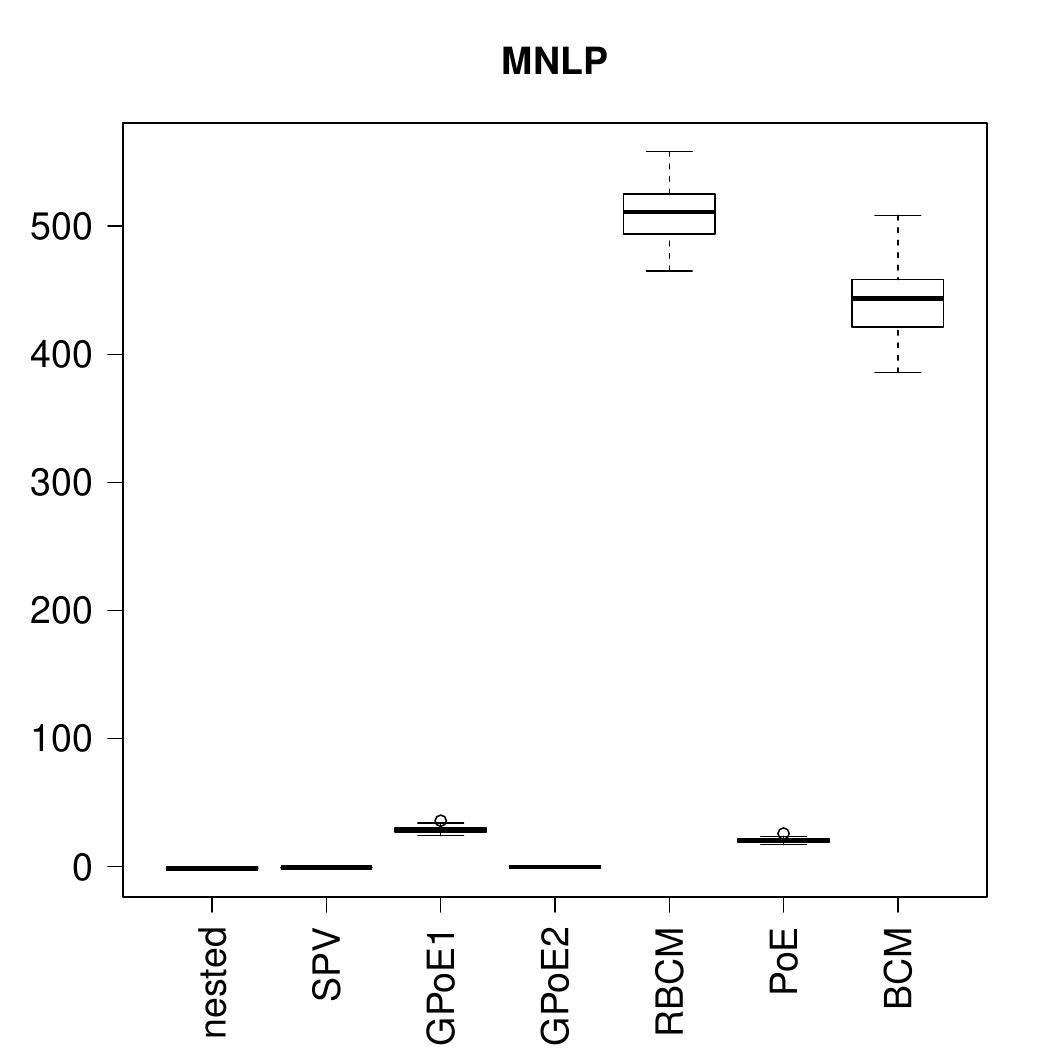} &
	\includegraphics[height=5cm]{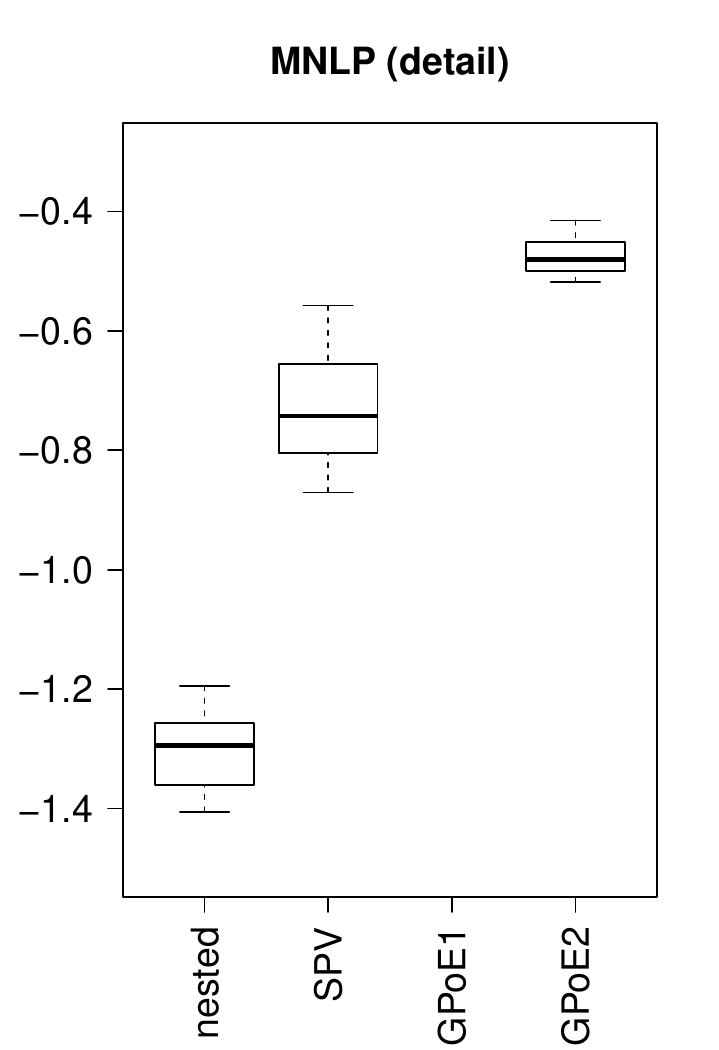}
\end{tabular}
\caption{Same settings as in Figure \ref{fig:box:plots} but with $p=90$ and where the subsamples are randomly selected.}
\label{fig:box:plots:p90}
\end{figure}

Of course, the improvement brought by our proposed aggregation scheme comes with a higher computational cost: the proposed estimation procedure takes a few hours on a personal computer, against a few tens of minutes for the minimization of the sum of the log likelihoods. Similarly, performing $1000$ predictions takes around $30$ seconds with our proposed optimal aggregation, against around $1$ second for the other simpler aggregation procedures. Nevertheless, we believe that the increased accuracy and robustness of the method we propose is worth the additional computational burden in many situations.

\FloatBarrier


\section{Conclusion}
\label{sec:6conclusion}

\paragraph{}
We have proposed a new method for aggregating sub-models based on subsets of observation points, with a particular emphasis on  Kriging  sub-models. Our method can be seen as an optimal linear weighting of sub-models, where the obtained weights are taking into account all pairwise covariances between the sub-models, thus avoiding some usual independence assumptions.

%

\paragraph{}
Compared to current existing aggregation techniques, we find several benefits to our our aggregation procedure.
First, it has some good theoretical
properties, like consistency or optimality based on a
slightly different process which can be simulated.
We refer again to \cite{bachoc2017} for details.
Second, a dedicated covariance parameter estimation procedure is provided, based on a gradient descent minimization of leave-one-out cross validation errors, where the predictions are performed using the proposed nested aggregation. Some
user-friendly code for computing both prediction and covariance parameter estimation is publicly available.

\paragraph{}

At last, numerical results are encouraging. In both simulated data and industrial application, our method is shown to outperform state-of-the-art aggregation techniques. This improvement comes with an increased computational cost compared to more basic aggregation methods, but the proposed nested aggregation remains applicable up to $n=10^6$ observation points, while exact Kriging inference becomes intractable around $n=10\,000$.

\paragraph{}

We would like to mention two avenues for future research. First, we show that the aggregation method we propose can be applied recursively, yielding a nested aggregation technique with smaller computational cost. It would be interesting to quantify the practical gain one could obtain on real data sets from this recursive aggregation. Second, we find that the stochastic gradient algorithm we propose could be further investigated. In particular, theoretical properties could be derived, the practical implementation could be improved, and the principle could be extended to other criteria for covariance parameter estimation.


\did{
 \section*{Acknowledgements}
Part of this research was conducted within the frame of the Chair in Applied Mathematics OQUAIDO, gathering partners in technological research (BRGM, CEA, IFPEN, IRSN, Safran, Storengy) and academia (Ecole Centrale de Lyon, Mines Saint-Etienne, University of Grenoble, University of Nice, University of Toulouse and CNRS) around advanced methods for Computer Experiments. The authors would like to warmly thank Dr. G\'eraud Blatman and EDF R\&D for providing us the industrial test case. They also thank both editor and reviewers for very precise and constructive comments on this paper. This paper has been finished during a stay of D. Rullière at Vietnam Institute for Advanced Study in Mathematics, the latter author thanks the VIASM institute and DAMI research chair (Data Analytics \& Models for Insurance) for their support.
}

\bibliographystyle{apalike}
\bibliography{biblio}


\appendix

\section{Proof of Proposition~\ref{prop:complexities}} \label{app:complexities}
\emph{Complexities}: under chosen assumption on $\alpha$ and $\beta$ coefficients, for a regular tree and in the case of simple Kriging sub-models, $\CalgoAlpha=\sum_{\nu=1}^{\numax} \sum_{i=1}^{n_\nu} \alpha c_\nu^3 =\alpha \sum_{\nu=1}^{\numax}  c_\nu^3 n_\nu$ and $\CalgoBeta=\sum_{\nu=1}^{\numax} \sum_{i=2}^{n_\nu} \sum_{j=1}^{i-1}\beta c^2_{\nu} =\frac{\beta}{2} \sum_{\nu=1}^{\numax}  n_\nu (n_{\nu}-1) c^2_{\nu}$. Notice that the sum starts from $\nu=1$ in order to include sub-models calculation.
\emph{Equilibrated trees complexities}: In a constant child number setting, when $c_\nu=c$ for all $\nu$, the tree structure ensures that $n_{\nu}=n/c^{\nu}$, thus as $c=n^{1/\numax}$, we get when $n \rightarrow +\infty$, $\CalgoAlpha \sim \alpha n^{1+\frac{2}{\numax}}$ and $\CalgoBeta \sim \frac{\beta}{2} n^2$. The result for equilibrated two-layer tree where $\numax=2$ directly derives from this one, and in this case $\CalgoAlpha \sim \alpha n^{2}$ and $\CalgoBeta \sim \frac{\beta}{2} n^2$ (it derives also from the expressions of $\CalgoAlpha$, $\CalgoBeta$, when $c_1=c_2=\sqrt{n}$, $n_1=\sqrt{n}$, $n_2=1$).
\emph{Optimal tree complexities}: One easily shows that under the chosen assumptions $\CalgoBeta \sim \frac{\beta}{2}n^2$. Thus, it is indeed not possible to reduce the whole complexity to orders lower than $O(n^2)$. However, one can choose the tree structure in order to reduce the complexity $\CalgoAlpha$. For a regular tree, $n_\nu=n/(c_1 \cdots c_{\nu})$ such that $\frac{\partial}{\partial c_k} n_{\nu} = -\Indic{\nu \ge k} n_\nu/c_k$. Using a Lagrange multiplier $\ell$, one defines $\xi(k)=c_k \frac{\partial}{\partial c_k} \left( \CalgoAlpha - \ell(c_1 \cdots c_{\numax} -n) \right) = 3\alpha c_k^3 n_k - \alpha \sum_{\nu=k}^{\numax}c_\nu^3 n_\nu - \ell c_1 \cdots c_{\numax}$.  The tree structure that minimizes $\CalgoAlpha$ is such that for all $k<\numax$, $\xi(k)=\xi(k+1)=0$. Using $c_{k+1} n_{k+1}=n_k$, one gets $3c_{k+1}^2=2 c_{k}^3$ for all $k<\numax$, and setting $c_1\cdots c_{\numax}=n$, $c_\nu = \delta \left( \delta^{-\numax} n\right)^{\frac{\delta^{\nu-1}}{2(\delta^{\numax}-1)}}$, $\nu=1, \ldots, \numax$,
with $\delta=\frac{3}{2}$. Setting $\gamma=\frac{27}{4}\delta^{-\frac{\numax}{\delta^{\numax}-1}}\left( 1-\delta^{-\numax}\right)$. After some direct calculations this tree structure corresponds to complexities, $\CalgoAlpha = \gamma \alpha  n^{1+\frac{1}{\delta^{\numax}-1}}$ and $\CalgoBeta \sim \frac{\beta}{2}n^2$.
In a two-layers setting one gets $c_1=\left(\frac{3}{2}\right)^{1/5} n^{2/5}$ and $c_2=\left(\frac{3}{2}\right)^{-1/5} n^{3/5}$, which leads to
$\CalgoAlpha = \gamma \alpha n^{9/5}$  and $\CalgoBeta= \frac{\beta}{2} n^2  - \frac{\beta}{2} \left(\frac{3}{2}\right)^{\frac{1}{5}}n^{\frac{7}{5}}$,
where $\gamma=(\frac{2}{3})^{-2/5}+(\frac{2}{3})^{3/5}\simeq 1.96$ (eventually notice that even for values of $n$ of order $10^5$, terms of order like $n^{9/5}$ are not necessarily negligible compared to those of order $n^2$, and that $\CalgoBeta$ is slightly affected by the choice of the tree structure, but the global complexity benefits from the optimization of $\CalgoAlpha$).\\
\emph{Storage footprint}: First, covariances can be stored in triangular matrices. So temporary objects $M$, $k$ and $K$ in Algorithm~\ref{algoKrigIteratif2} require the storage of $c_{\max}(c_{\max}+5)/2$ real values. For a given step $\nu$, $\nu \ge 2$, building all vectors $\alpha_i$ requires the storage of $\sum_{i=1}^{n_{\nu}} c_i^{\nu}=n_{\nu-1}$ values. At last, for a given step $\nu$, we simultaneously need objects $M_{\nu-1}, K_{\nu-1}, M_{\nu}, K_{\nu}$, which require the storage of $n_{\nu-1}(n_{\nu-1}+3)/2 + n_{\nu}(n_{\nu}+3)/2$ real values.
In a regular tree, as $n_\nu$ is decreasing in $\nu$, the storage footprint is $\mathcal{S} = (c_{\max}(c_{\max}+5) + n_1(n_1+5) + n_2(n_2+3))/2$. Hence the equivalents for $\mathcal{S}$ for the  different tree structures, $\mathcal{S}\sim n$ for the two-layer equilibrated tree, $\mathcal{S}\sim \frac{1}{2}n^{2-2/\numax}$ for the $\numax$-layer, $\numax>2$, and the indicated result for the optimal tree. Simple orders are given in the proposition, which avoids separating the case $\numax=2$ and a cumbersome constant for the optimal tree.

\end{document}